\newtheorem{assumption}{Assumption}
\newtheorem*{theorem*}{Theorem}
\newcommand{\textinf}{\text{inf}}
\newcommand{\ellmax}{{\ell_\text{max}}}
\newcommand{\biased}{\text{biased}}
\newcommand{\naive}{\text{naive}}
\newcommand{\MLMC}{\text{MLMC}}
\newcommand{\DMLMC}{\text{DMLMC}}
\newcommand{\EE}{\mathbb{E}}
\newcommand{\RR}{\mathbb{R}}
\newcommand{\Ocal}{\mathcal{O}}
\newcommand{\rd}{\mathrm{d}}
\newcommand{\bsxi}{\boldsymbol{\xi}}
\DeclareMathOperator*{\minimize}{minimize}
\title[On the Parallel Complexity of MLMC in SGD]{On the Parallel Complexity of Multilevel Monte Carlo \\ in Stochastic Gradient Descent} 
\thanks{A major part of the work was done while the author was at ETH Zurich, Switzerland.}
\begin{document}

\maketitle

% \ki{TODO: check the correctness of the equations. For the main text, do it more than twice!}

\begin{abstract}
In the stochastic gradient descent (SGD) for sequential simulations such as the neural stochastic differential equations, the Multilevel Monte Carlo (MLMC) method is known to offer better theoretical computational complexity compared to the naive Monte Carlo approach.
However, in practice, MLMC scales poorly on massively parallel computing platforms such as modern GPUs,  
because of its large parallel complexity which is equivalent to that of the naive Monte Carlo method.
To cope with this issue, we propose the delayed MLMC gradient estimator that drastically reduces the parallel complexity of MLMC by recycling previously computed gradient components from earlier steps of SGD. 
The proposed estimator provably reduces the average parallel complexity per iteration at the cost of a slightly worse per-iteration convergence rate.
In our numerical experiments, we use an example of deep hedging to demonstrate the superior parallel complexity of our method compared to the standard MLMC in SGD.
\end{abstract}

\section{Introduction}

% SDEs
In this paper, we study the stochastic gradient descent (SGD) for sequential stochastic simulations such as the neural stochastic differential equations (SDEs).
Since the seminal work by \citet{neural-odes} on neural ordinary differential equations (ODEs), the neural differential equations have gained considerable traction in the machine learning community.
The development of neural differential equations has led to various extensions beyond ODEs \citep{neural-odes} and SDEs \citep{nsde-basic},
such as neural jump ODEs \citep{krach2021neural} and SDEs \citep{jia2019neural}, neural control differential equations \citep{kidger2020neuralcde}, and neural stochastic partial differential equations (PDEs) \citep{salvi2022neural}.

% MLMC 
In the Monte Carlo simulation of such sequential models, the multilevel Monte Carlo (MLMC) method \citep{giles2008multilevel, giles2015multilevel} is a popular choice to improve the computational complexity compared to the naive Monte Carlo approach.
Notably, recent research by \citet{ko2023multilevel} explored the application of MLMC to neural SDEs, and \citet{hu2021bias} conducted an in-depth theoretical analysis, shedding light on the performance improvements that MLMC offers to SGD. 
However, when it comes to practical implementation, MLMC exhibits a scalability issue when combined with SGD on massively parallel computers such as GPUs.

In highly concurrent settings, where we can increase the batch size significantly to reduce the impact of the stochasticity in the gradient estimator during optimization, the primary bottleneck for performance shifts from standard computational complexity to the parallel complexity\footnote{This is often called the work time of the algorithm on parallel random-access machine model \citep{jaja1992introduction}.} of the gradient estimator. 
In such a scenario, the limiting factor for performance becomes the total number of iterations in the SGD process, which is constrained by the parallel complexity of the gradient estimator, and consequently, the benefit of the variance reduction offered by MLMC becomes marginal.
This shift poses a challenge to the effective use of MLMC within the SGD framework on high-performance parallel hardware, as the traditional MLMC estimator requires the same parallel complexity as the naive Monte Carlo estimator.

To address this issue, we propose a novel adaptation of the traditional MLMC approach, which we term "delayed MLMC."
By periodically sampling the expensive parts of gradients and reusing those values for the rest of the time, the delayed MLMC achieves a substantial reduction in parallel complexity.
Later in the paper, we provide theoretical analysis, where we derive the convergence rate of the SGD with the delayed MLMC for a smooth non-convex objective, and demonstrate its practical relevance by a numerical experiment using a neural SDE model.

Before delving into the technical discussions, we offer a brief comparison of our work with existing literature on variance reduction techniques for SGD. 
Some of the most popular variance reduction techniques such as SAG \citep{schmidt2017minimizing}, SVRG \citep{johnson2013accelerating}, SAGA \citep{defazio2014saga}, and SPIDER \citep{fang2018spider} are orthogonal to our approach and they may be combined with our method.
Still, they share a similarity with our method in that they also take advantage of the smoothness of the loss function, which requires gradients for two similar parameters to be proportionally similar.
Our work aligns most closely with that of \citet{hu2021bias}, which examines the convergence of SGD with MLMC, albeit with a primary focus on standard complexity, whereas our focus is on parallel complexity.

\section{Background on Multilevel Monte Carlo Method}

Here, we provide an overview of the conventional MLMC estimator of the gradient that offers better computational complexity than the naive Monte Carlo gradient estimator.
Conceptually, MLMC reduces the variance of the Monte Carlo average by considering a hierarchy of the discretization size for a simulation.
Instead of running the most precise but very expensive random simulations many times and taking their average, it combines the results of a large number of cheap but low-accuracy simulations with a very small number of highly accurate simulations.

To formally discuss this concept, we introduce a sequence of approximations for the true random simulation, denoted as $\{\hat F_\ell(x, \xi)\}_{\ell=0}^{\ellmax}$
along with their expectations $\{F_\ell(x)\}_{\ell=0}^{\ellmax}$ so that $F_\ell(x)= \EE_\xi[\hat F_\ell(x, \xi)]$.
Here, $x\in\RR^m$ represents a parameter of the function and $\xi$ is a random variable.
The quality of the approximation improves as we increase level $\ell$ with the maximum level offering the best possible approximation.
Given such approximations, we would like to solve optimization problem
\begin{equation}
    \minimize_{x\in\RR^m} F(x) := F_\ellmax(x).
\end{equation}
For instance, in the case of SDEs, the approximation at level $\ell$ corresponds to the SDE simulation with step size $\Delta t = \delta 2 ^{-\ell}$.
Although it is possible to use the naive SGD with gradient estimator $\nabla \hat F_\ellmax(x, \xi)$\footnote{Here, derivative is taken with respect to the parameter $x$ so that $\nabla = \left(\frac{\rd}{\rd x_1},\frac{\rd}{\rd x_2}, \ldots,\frac{\rd}{\rd x_m}\right)^T$.} to optimize the above objective, for many types of sequential simulations such as SDEs, it is known that we can construct a more sample efficient gradient estimator using MLMC.

For MLMC to be applicable to the problem above, we need to make additional assumptions:

\begin{assumption}[Complexity of the Gradient]\label{assum:complexity}
Both standard and parallel complexities of estimator $\nabla \hat F_\ell$ grow exponentially to $\ell$.
In other words, there exists constant $c$ such that for any $x$ and $\ell$,
\begin{equation*}
\mathrm{Complexity}\left[\nabla \hat F_\ell(x, \xi)\right] = \Ocal(2^{c\ell}) \text{\ \ and \ \ }
\mathrm{ParallelComplexity}\left[\nabla \hat F_\ell(x, \xi)\right] = \Ocal(2^{c\ell}).
\end{equation*}
\end{assumption}

Furthermore, we introduce so-called coupled estimator $\Delta_\ell \hat F(x, \xi):=\hat F_\ell(x, \xi) - \hat F_{\ell - 1}(x, \xi)$ for $\ell=0, \ldots, \ellmax$.
Here, for notational simplicity, we set $\hat F_{-1}(x, \xi) = 0$.
This coupled estimator allows us to decompose original stochastic objective $\hat F_\ellmax(x, \xi)$ as 
$\hat F_\ellmax(x, \xi) = \sum_{\ell=0}^\ellmax \Delta_\ell F(x, \xi)$.
Similarly, we define the difference of the expectations as $\Delta_\ell F(x) := F_\ell(x) - F_{\ell-1}(x)$.
% Due to the linearity of the differentiation operator and expectation operator, the derivatives of these estimators become the estimators of the gradients of $\{F_\ell(x)\}_{\ell=0}^{\ellmax}$ under some regular conditions that allow for the exchange of differentiation and expectation operators, and we assume such regularity condition are satisfied throughout the paper.
For this gradient estimator of difference $\nabla \Delta_\ell \hat F(x, \xi)$, we make the following assumption on the exponential decay of variance. 

\begin{assumption}[Decay of the Variance]\label{assum:variance}
There exist constants $M$ and $b$ such that for any $\ell$ and $x$,
$$
\EE_\xi\left\|\nabla \Delta_\ell \hat F(x, \xi) - \nabla \Delta_\ell F(x)\right\|^2
\leq 2^{-b\ell} M.
$$
Furthermore, we assume that the decay rate of variance, represented by the parameter $b$, is faster compared to the increase rate of the cost $c$ defined in Assumption \ref{assum:complexity}, so that $b > c$.
\footnote{
    The latter assumption is made for the sake of simplicity.
    Though it is not always required for MLMC to achieve better convergence than the naive Monte Carlo method, it is essential for the fastest convergence rate of MLMC \cite{giles2015multilevel}.
}
\end{assumption}

Here, to understand the feasibility of this assumption in practice, let us consider an example of an SDE simulation.
In the case of SDEs, the coupled estimator corresponds to the difference between two simulations using different discretizations of the same continuous Brownian motion path $\xi$. 
Compared to coarse simulation $\nabla \hat F_{\ell-1}(x, \xi)$ in the previous level, $\nabla \hat F_\ell(x, \xi)$ uses a finer time grid with half the step size.
As both of them approximate the same SDE solution given a Brownian path, the difference between them tends to decay quickly, as in the assumption.
\footnote{
    The above assumption holds for $b=2k$ if we use an SDE solver with strong order $k$ \citep{kloedenplaten1992} for computing $\hat F_\ell(x, \xi)$'s and its gradient by adjoint method \citep{scalable-sde}.
    Indeed, a common choice of an SDE solver for MLMC is the Milstein scheme \citep{giles2008improved}, and it has strong order $k=1$.
    Nevertheless, Assumption \ref{assum:variance} (and \ref{assum:smoothness}) cannot always be guaranteed theoretically. In such cases, one has to confirm this assumption experimentally, as we have done in our numerical experiment.
}

Now, we introduce the MLMC gradient estimator with effective batch size $N$ as
\begin{equation*}
   \nabla \hat F_\MLMC 
   = \sum_{\ell=0}^{\ellmax} \frac{1}{N_\ell} \sum_{n=1}^{N_\ell}
   \nabla \Delta_\ell \hat F(x, \xi_{\ell,n}),
\end{equation*}
where $N_\ell = \left\lceil \frac{2^{-(b + c)\ell / 2}}{\sum_{\ell=0}^\ellmax 2^{-(b + c)\ell / 2}}\cdot N\right\rceil = \Theta(N 2 ^{-(b + c) \ell / 2})$.
\footnote{
    Here, $\lceil \cdot \rceil$ and $\lfloor \cdot \rfloor$ denote the ceiling function and the floor function, respectively.
}
This estimator has $\sum_{\ell=0}^\ellmax N_\ell \Ocal\left(2^{c\ell}\right) = \Ocal(N)$ complexity and
$\sum_{\ell=0}^\ellmax \frac{M 2^{-b\ell}}{N_\ell} = \Ocal(N^{-1})$ variance due to the assumption that $b > c$.
Thus, the MLMC estimator is more efficient than the naive Monte Carlo estimator 
\begin{equation*}
   \nabla \hat F_\naive = \frac{1}{N}\sum_{n=1}^N\nabla \hat F_\ellmax(x, \xi_n),
\end{equation*}
with $\Ocal(N 2^{c\ellmax})$ complexity and $\Ocal(N^{-1})$ variance.
In Appendix \ref{app:mlmc}, we describe the derivation of this optimal allocation of per-level sample size $N_\ell$ for MLMC that we used here.

\section{Delayed Multilevel Monte Carlo method for SGD}

As discussed above the MLMC estimator has superior computational complexity than the naive Monte Carlo estimator. 
However, the computation of the MLMC gradient always requires the computation of the highest level with $\Ocal(2^\ellmax)$ parallel complexity, which makes the MLMC-based SGD as slow as the naive SGD on a massively parallel computer.

To cope with this problem, we propose the delayed MLMC for SGD, which we describe in Algorithm \ref{algo:delayed_mlmc}, where we introduced
$\nabla \Delta_\ell \hat F_\MLMC(x, \bsxi_{\ell})
= \frac{1}{N_\ell} \sum_{n=1}^{N_\ell} \nabla \Delta_\ell \hat F(x, \xi_{\ell,n})$ 
for $\bsxi_\ell = \left(\xi_{\ell, 1}, \ldots, \xi_{\ell, N_\ell}\right)$.
With this notation, the standard MLMC estimator at step $t$ can be written as 
$\nabla \hat F_\MLMC^{(t)}  = \sum_{\ell=0}^\ellmax\nabla \Delta_\ell \hat F_\MLMC(x_t, \bsxi_{t, \ell})$
whereas the delayed MLMC estimator becomes
\begin{equation*}
   \nabla \hat F_\DMLMC^{(t)}
   = \sum_{\ell=0}^{\ellmax} \frac{1}{N_\ell} \sum_{n=1}^{N_\ell}
   \nabla \Delta_\ell \hat F_\MLMC(x_{\tau_\ell(t)}, \bsxi_{\tau_\ell(t), \ell}).
\end{equation*}
Instead of calculating the gradient at each level every time step, the delayed MLMC estimator computes the gradient at level $\ell$ only once per every $\lfloor 2^{d\ell} \rfloor$ steps, and when the gradient computation is skipped, it reuses the most recent gradient computed at time $\tau_\ell(t)$, which satisfies $t - \lfloor 2^{d\ell} \rfloor \leq \tau_\ell(t) \leq t$ and 
$\tau_\ell(t) \equiv 0 \mod \lfloor2^{d\ell}\rfloor$.
Under Assumption \ref{assum:complexity}, the parallel complexities of the standard SGD and the MLMC-based SGD per iteration are both $\Ocal\left( 2^{c\ellmax} \right)$.
In contrast, the average parallel complexity of the delayed MLMC gradient descent (Algorithm \ref{algo:delayed_mlmc}) per iteration is 
$\Ocal\left(\sum_{\ell=0}^{\ellmax} 2^{(c - d)\ell} \right)$, which is an improvement by a factor of $2^{d\ellmax}$ to $2^{c\ellmax}$, depending on the magnitude of $c$ and $d$.
\footnote{
    Summation $\sum_{\ell=0}^{\ellmax} 2^{(c - d)\ell}$ becomes 
    $\Ocal(1)$ for $c < d$,
    $\Ocal(\ellmax)$ for $c = d$, and
    $\Ocal(2^{(c-d)\ellmax})$ for $c > d$.
}
Here, a natural question to ask is to how much extent we can tolerate the bias in the gradient introduced by the delayed scheme.
In the next section, we answer this question with theoretical analysis by showing that it can be controlled to negligible magnitude when learning rate $\alpha_t$ is taken small enough.

\begin{algorithm2e}[htbp]
\caption{SGD with the delayed MLMC}\label{algo:delayed_mlmc}
\SetAlgoLined
Initialize $x_0$.\\
\For{$t = 0, \ldots, T$}{
    \For{$\ell = 0, \ldots, \ellmax$}{
        \If{$t \equiv 0 \mod \lfloor2^{d\ell}\rfloor$}{
            Sample a new gradient at current $x_t$ as 
            $\nabla \Delta_\ell \hat F^{(t)} \gets \nabla \Delta_\ell \hat F_\MLMC(x_t, \bsxi_{t, \ell})$\footnotemark.
            
            Update the time of the latest gradient as $ \tau_\ell \gets t$.
        }
    }
    Compute delayed gradient estimator as 
   $\nabla \hat F^{(t)}_\DMLMC \gets \sum_{\ell=0}^{\ellmax} \nabla \Delta_\ell \hat F^{(\tau_\ell)}$.
   
    Update the parameter $x$ as
    $x_{t+1} \gets x_t - \alpha_t \nabla \hat F^{(t)}_\DMLMC$.
}
\end{algorithm2e}

\footnotetext{
Here, $\bsxi_{t, \ell}$ are sampled independently from past samples and samples from the other levels.
}

\section{Theoretical Guarantee}

In this section, we present the results of our theoretical analysis of the delayed MLMC. To justify the skipping of gradient computation, we make the following assumption regarding smoothness.

\begin{assumption}[Decay of the Smoothness]\label{assum:smoothness}
There exist constants $L$ and $d$ such that for any $x_1, x_2$ and $\ell$,
$$\left\|\nabla \Delta_\ell F(x_1) - \nabla \Delta_\ell F(x_2) \right\| \leq 2^{-d\ell} L \|x_1 - x_2\|.$$
\end{assumption}

This assumption guarantees that gradients at higher levels undergo progressively smaller changes throughout the optimization process, enabling us to skip the computation of higher levels in the delayed MLMC.
Here, note that we can trivially obtain the standard smoothness condition for SGD  from this assumption as 
$\left\| \nabla F(x_1) - \nabla F(x_2) \right\|
\leq  \sum_{\ell=0}^{\ellmax} \left\| \nabla \Delta_\ell F(x_1) - \nabla \Delta_\ell F(x_2) \right\| 
= \sum_{\ell=0}^{\ellmax} 2^{-d\ell} L \|x_1 - x_2\|$.
Thus, $\nabla F(x)$ is $L'$-smooth for $L':= \left( \sum_{\ell=0}^\infty 2^{-d\ell} \right) L$.
With this additional assumption in place, we can now present our main theorem (with the proof available in the appendix):
\begin{theorem}[Delayed MLMC Gradient Descent for Non-Convex Functions]\label{thm:delayed_mlmc}
Under Assumption \ref{assum:variance} and \ref{assum:smoothness}, suppose we run SGD with delayed MLMC gradient estimator $\nabla \hat F^{(t)}_\DMLMC$ as in Algorithm \ref{algo:delayed_mlmc}.
Assume that the step sizes are chosen as $\alpha_t=\alpha_0\leq\min\left\{\frac{1}{8L'}, \frac{\beta}{L}\right\}$ for $\beta$ satisfying $0 < \beta \leq \frac{1}{12(\ellmax + 1)\left(\sum_{\ell=0}^{\infty}2^{-d\ell}\right)\log(2T+1)}$. Then, we have
\begin{equation*}
    \frac{1}{T}\sum_{t=0}^{T-1}\EE\|\nabla F(x_t)\|^2
    \leq \frac{8(F(x_0) - F_\textinf)}{\alpha_0 T}
    + \left( 24\ellmax + \frac{49}{2} \right) M'
    \leq \Ocal\left(\left(\frac{\log T}{T} + \frac{M}{N}\right)\ellmax\right),
\end{equation*}
where we defined 
$F_\textinf:= \inf_{x}F(x)$ and 
$M' :=
\frac{M}{N}
    \left(\sum_{\ell=0}^\ellmax 2^{-(b + c)\ell / 2}\right)
    \left(\sum_{\ell=0}^\ellmax 2^{-(b - c)\ell / 2}\right)
$.
\end{theorem}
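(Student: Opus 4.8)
The plan is to run the standard descent-lemma analysis for the $L'$-smooth objective $F$, but with careful bookkeeping of the two errors the delayed scheme introduces: the \emph{staleness bias} from evaluating level $\ell$ at $x_{\tau_\ell(t)}$ instead of $x_t$, and the \emph{sampling noise} of the per-level Monte Carlo averages. Writing $g_t := \nabla \hat F^{(t)}_\DMLMC$ so that $x_{t+1}=x_t-\alpha_0 g_t$, the $L'$-smoothness of $F$ gives
\[
\EE[F(x_{t+1})] \le \EE[F(x_t)] - \alpha_0\,\EE\langle \nabla F(x_t), g_t\rangle + \tfrac{L'\alpha_0^2}{2}\,\EE\|g_t\|^2 .
\]
I would then decompose $g_t = \nabla F(x_t) + \delta_t + \nu_t$, where $\delta_t := \sum_{\ell} \big(\nabla \Delta_\ell F(x_{\tau_\ell(t)}) - \nabla \Delta_\ell F(x_t)\big)$ is the staleness bias and $\nu_t := \sum_\ell \epsilon_\ell$, with $\epsilon_\ell := \nabla \Delta_\ell \hat F_\MLMC(x_{\tau_\ell(t)},\bsxi_{\tau_\ell(t),\ell}) - \nabla \Delta_\ell F(x_{\tau_\ell(t)})$ the centered noise of level $\ell$. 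The inner product then splits into the descent term $\|\nabla F(x_t)\|^2$ plus the two cross terms $\langle\nabla F(x_t),\delta_t\rangle$ and $\langle\nabla F(x_t),\nu_t\rangle$, which are the objects to be controlled.

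For the staleness bias I would invoke Assumption~\ref{assum:smoothness} level by level. Since $x_t$ and $x_{\tau_\ell(t)}$ differ by at most $\lfloor 2^{d\ell}\rfloor$ SGD steps, the triangle inequality gives $\|x_{\tau_\ell(t)}-x_t\|\le \alpha_0\sum_{s=\tau_\ell(t)}^{t-1}\|g_s\|$, and Assumption~\ref{assum:smoothness} bounds the level-$\ell$ contribution by $2^{-d\ell}L\,\|x_{\tau_\ell(t)}-x_t\|$. The decisive cancellation is that the smoothness-decay factor $2^{-d\ell}$ exactly offsets the $\lfloor 2^{d\ell}\rfloor$ steps over which level $\ell$ is frozen, so that after Cauchy--Schwarz and a reindexing in which each $\|g_s\|^2$ is charged to at most $\lfloor 2^{d\ell}\rfloor$ time steps, the aggregate bias obeys $\sum_t \EE\|\delta_t\|^2 \le \alpha_0^2 L^2 (\ellmax+1)^2 \sum_t \EE\|g_t\|^2$. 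I would dispatch $\langle\nabla F(x_t),\delta_t\rangle$ by Young's inequality, sending a $\tfrac18$-fraction of $\|\nabla F(x_t)\|^2$ back into the descent term (this is the source of the factor $8$ in the final bound) and leaving $\sum_t\EE\|\delta_t\|^2$ to be absorbed later.

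The harder object is the noise cross term $\EE\langle\nabla F(x_t),\nu_t\rangle$. For the levels \emph{resampled} at step $t$ the fresh batch is independent of $x_t$ and the term vanishes; the difficulty is the \emph{reused} levels, whose stored noise already entered the trajectory $x_{\tau_\ell(t)+1},\dots,x_t$ and is therefore correlated with $\nabla F(x_t)$. Here I would exploit the martingale structure at the resampling time: conditioning on the history up to $\tau_\ell(t)$ makes $\epsilon_\ell$ mean-zero while $\nabla F(x_{\tau_\ell(t)})$ is measurable, so $\EE\langle\nabla F(x_{\tau_\ell(t)}),\epsilon_\ell\rangle=0$ and only the residual $\EE\langle\nabla F(x_t)-\nabla F(x_{\tau_\ell(t)}),\epsilon_\ell\rangle$ survives. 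This residual is controlled again by smoothness together with the per-level variance decay $\EE\|\epsilon_\ell\|^2\le 2^{-b\ell}M/N_\ell$, which under $N_\ell=\Theta(N2^{-(b+c)\ell/2})$ is exactly the computation producing $\sum_\ell\EE\|\epsilon_\ell\|^2\le M'$. For the second-moment term I would use $\EE\|g_t\|^2\le 3\EE\|\nabla F(x_t)\|^2+3\EE\|\delta_t\|^2+3M'$, the last bound following because batches from distinct levels and resampling times are uncorrelated, so $\EE\|\nu_t\|^2=\sum_\ell\EE\|\epsilon_\ell\|^2\le M'$.

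Summing the descent inequality over $t=0,\dots,T-1$, telescoping $F$, and replacing every $\sum_t\EE\|\delta_t\|^2$ by the aggregate bias bound turns the recursion into a self-bounding inequality in $\sum_t\EE\|g_t\|^2$ and $\sum_t\EE\|\nabla F(x_t)\|^2$. Choosing $\alpha_0\le\min\{1/(8L'),\beta/L\}$ with $\beta$ as in the statement makes the coupling coefficient $\alpha_0^2L^2(\ellmax+1)^2$ (and the analogous noise-residual coefficients) strictly below one, so the $\sum_t\EE\|g_t\|^2$ terms can be moved left and reabsorbed; the $(\ellmax+1)$ and the $\log(2T+1)$ in the admissible range of $\beta$ are precisely what guarantees this contraction with room to spare. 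Dividing by $\alpha_0 T$ then yields the middle bound $\tfrac{8(F(x_0)-F_\textinf)}{\alpha_0 T}+(24\ellmax+\tfrac{49}{2})M'$, and the final $\Ocal$ form follows since $\alpha_0=\Theta\big(1/((\ellmax+1)\log T)\big)$ and the geometric sums $\sum_\ell 2^{-(b\pm c)\ell/2}$ converge under $b>c$, whence $M'=\Ocal(M/N)$. I expect the main obstacle to be exactly this reused-noise cross term: unlike in ordinary SGD it is not zero, and controlling its residual without picking up a factor that grows with the maximal delay $2^{d\ellmax}$ is what forces both the conditioning at the resampling time and the tuned, $\log T$-dependent step size.
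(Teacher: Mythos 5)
Your architecture differs from the paper's in two respects. First, for the staleness bias you aggregate $\sum_t\EE\|\delta_t\|^2$ over the whole trajectory with a charging argument (each $\|g_s\|^2$ is hit at most $\lfloor 2^{d\ell}\rfloor$ times by level $\ell$, which exactly cancels the $2^{-d\ell}$ from Assumption \ref{assum:smoothness}), and then close a self-bounding inequality in $\sum_t\EE\|g_t\|^2$; the paper instead proves a per-$t$ recursive bound $\EE\|\nabla F(x_t)-\nabla\hat F^{(t)}_\DMLMC\|^2\le K_1M'+K_2\sum_{s<t}\tfrac{1}{t-s}\EE\|\nabla F(x_s)\|^2$ by induction (Lemmas \ref{lemma:bounded_error_1} and \ref{lemma:bounded_error_2}), with harmonic sums producing the $\log(2T+1)$ that calibrates $\beta$. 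Your version of this step is correct and arguably cleaner. Second --- and this is where the gap is --- you try to exploit the conditional unbiasedness of the stored noise $\epsilon_\ell$ at its resampling time, reducing the reused-level cross term to $\EE\langle\nabla F(x_t)-\nabla F(x_{\tau_\ell(t)}),\epsilon_\ell\rangle$ and asserting it is controlled ``by smoothness together with the per-level variance decay.'' That step does not go through as sketched: $\|\nabla F(x_t)-\nabla F(x_{\tau_\ell(t)})\|\le L'\alpha_0\sum_{s=\tau_\ell(t)}^{t-1}\|g_s\|$ is a sum of up to $\lfloor 2^{d\ell}\rfloor$ terms, each of which itself contains the very same stored $\epsilon_\ell$, and any Young or Cauchy--Schwarz split of the resulting product leaves a factor of order $2^{d\ell}$ (or worse) multiplying $\EE\|\epsilon_\ell\|^2$; since nothing in Assumptions \ref{assum:variance} and \ref{assum:smoothness} relates $d$ to $b-c$, the resulting sum over $\ell$ need not converge, and you would pick up precisely the $2^{d\ellmax}$-type blow-up you say you want to avoid.

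The repair is to abandon the martingale refinement for reused levels entirely: bound $-\EE\langle\nabla F(x_t),\epsilon_\ell\rangle\le\tfrac{1}{8(\ellmax+1)}\EE\|\nabla F(x_t)\|^2+2(\ellmax+1)\EE\|\epsilon_\ell\|^2$ and sum over $\ell$, which costs an extra factor of $(\ellmax+1)$ on $M'$ --- exactly the $24\ellmax\, M'$ term already present in the theorem, and exactly what the paper does implicitly by lumping the stale noise into the squared error $\EE\|\nabla F(x_t)-\nabla\hat F^{(t)}_\DMLMC\|^2$ in Lemma \ref{th:biased_sgd} and never invoking unbiasedness of reused levels. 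With that replacement (and your correct observation that $\EE\|\nu_t\|^2=\sum_\ell\EE\|\epsilon_\ell\|^2\le M'$ by a tower argument across resampling times) your route closes and reproduces the stated rate; as written, the reused-noise cross term is the one step that would fail.
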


As can be seen, our convergence rate depends on variance of the gradient $\frac{M}{N}$, but with a massively parallel computer, we can take the (effective) batch size $N$ sufficiently large to reduce the variance term to a negligible magnitude.
Then, the convergence rate of delayed MLMC becomes $\Ocal\left(\left(\frac{\log T}{T}\right)\ellmax\right)$, which is slightly less favorable than $\Ocal(\frac{1}{T})$ rate of both MLMC and naive method.
At the cost of the additional factor of $\Ocal\left(\log T \cdot \ellmax\right)$, the delayed MLMC gains substantial improvement in its parallel complexity as discussed earlier.
For clarity, in Table \ref{tab:comparison}, we provide a summarized comparison of the convergence rate and the complexities of these methods.

\begin{table}[h]
    \centering
    \begin{tabular}{|c|c|c|c|}
        \hline
             & Convergence rate & Complexity & Parallel complexity \\
        \hline
         Naive SGD & $\Ocal(\frac{1}{T} + \frac{M}{N} \ellmax)$ & $\Ocal(NT 2^{c\ellmax})$ & $\Ocal(T 2^{c\ellmax})$ \\
         MLMC + SGD & $\Ocal(\frac{1}{T} + \frac{M}{N})$ & $\Ocal(NT)$ & $\Ocal(T 2^{c\ellmax})$ \\
         Delayed MLMC + SGD (ours) & $\Ocal((\frac{\log T}{T} + \frac{M}{N}) \ellmax)$& $\Ocal(NT)$ &  $\Ocal(T\sum_{\ell=0}^\ellmax 2^{(c-d)\ell})$\footnotemark \\
         \hline
    \end{tabular}
    \caption{
        A comparison of the convergence rate of $\frac{1}{T}\sum_{t=0}^{T-1}\EE\|\nabla F(x_t)\|^2$, the (standard) complexity, and the parallel complexity of different methods.
        Parameters $T$, $M$, and $N$ are the number of iterations in SGD, the variance of the gradient, and the effective batch size, respectively.
        As $\frac{M}{N}$ can be ignored if a massively parallel computer is available to increase the batch size $N$ arbitrarily large, the leading terms become dominant in the convergence rate.
    }
    \label{tab:comparison}
\end{table}

\footnotetext{
    Again, this summation becomes 
    $\Ocal(T)$ for $c < d$,
    $\Ocal(T\ellmax)$ for $c = d$, and
    $\Ocal(T2^{(c-d)\ellmax})$ for $c > d$.
}

\section{Experiments}

In the numerical experiments, we employed an example of deep hedging \citep{buehler2019deep} to assess the performance of the proposed method. 
In the context of deep hedging, our goal is to solve optimization problem
$\min_{\theta\in\Theta, p_0\in\RR}\EE\left| \max\{S_1 - K, 0\} - \int_0^1 H_\theta(t, S_t)\rd S_t - p_0 \right|^2$, to find the optimal hedging strategy $H(t, s)$.
In our experiments, we chose the underlying asset price process $S_t$ to be a geometric Brownian motion, and hedging strategy $H_\theta(t, s)$ was parameterized using a deep neural network. For more detailed information regarding the experimental setup, please refer to Appendix \ref{app:experimental_settings}.

\begin{figure}[htbp]
    \vspace{-1em}
    \centering
    \includegraphics[width=70mm, trim={15 10 40 10mm},clip]{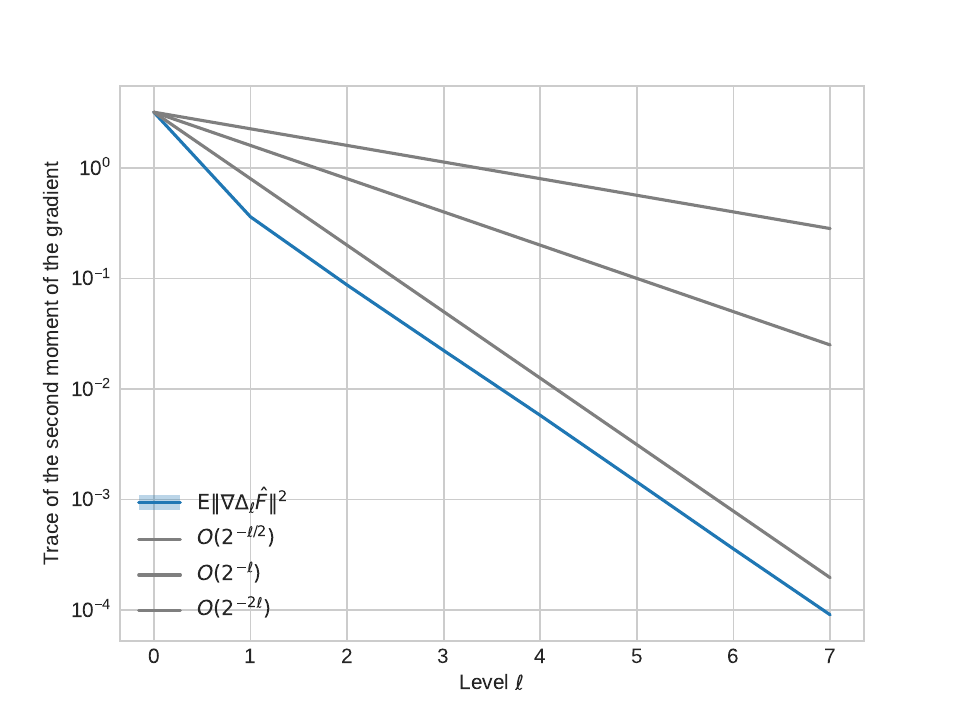}
    \hfill
    \includegraphics[width=70mm, trim={15 10 40 10mm},clip]{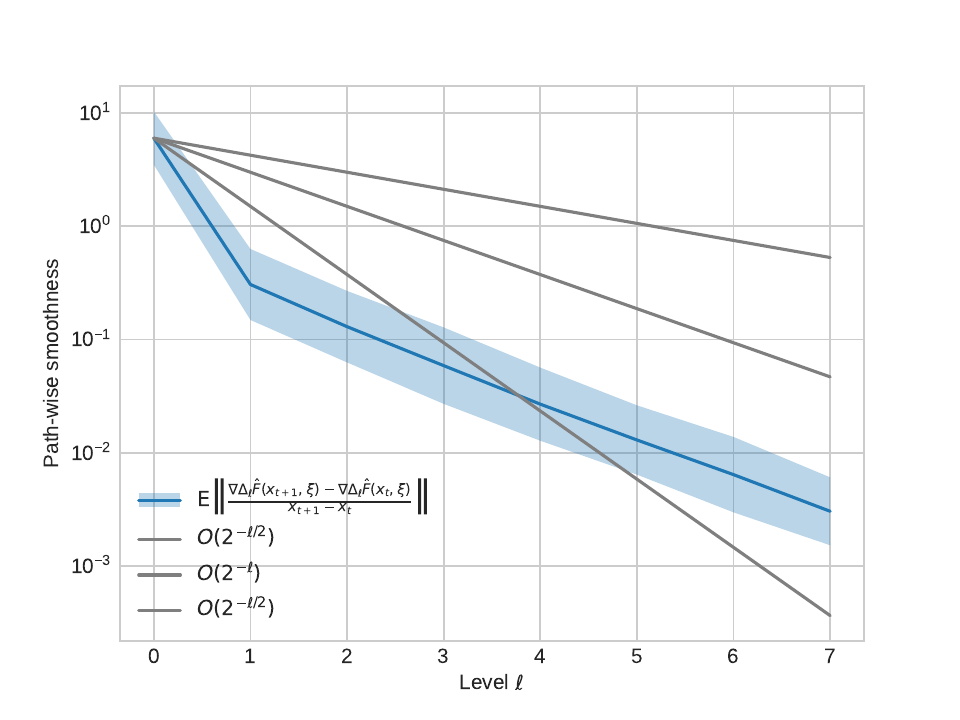}
    \vspace{-0.5em}
    \caption{
        The decay of squared norm of the gradient 
        $\EE\left\|\nabla\Delta_\ell \hat F(x, \xi) \right\|^2$ (left)
        and path-wise smoothness 
        $\EE\left\|\frac{\nabla\Delta_\ell \hat F(x_{t+1}, \xi) - \nabla\Delta_\ell \hat F(x_t, \xi)}{x_{t+1} - x_t}\right\|$ (right).
        These expectations were estimated by running the Monte Carlo simulation for the parameters during the optimization. The line and the band indicate the mean and standard deviation of the estimated values.
    }
    \label{fig:decay}
    \vspace{-1.5em}
\end{figure}

To assess the validity of Assumption \ref{assum:variance} and \ref{assum:smoothness}, we examined the decay rate of the variance and the smoothness of $\nabla \Delta_\ell \hat F(x, \xi)$ during the optimization, as shown in Figure \ref{fig:decay}.
To estimate the decay rate of the variance, we instead tracked squared norm of the gradient $\EE\left\| \nabla \Delta_\ell \hat F(x, \xi_{t, \ell})\right\|^2$, which provides an upper bound on the variance.
Since the direct estimation of the smoothness is difficult, we approximated it with the path-wise smoothness in $L^1$ norm.
From these figures, we can reasonably assume the values of $b$ to be close to $2$ and $d$ to be $1$ in the aforementioned assumptions, which implies that the standard MLMC and the delayed MLMC are applicable to our problem.

\begin{figure}[htbp]
    \centering
    \includegraphics[width=70mm, trim={15 10 40 10mm},clip]{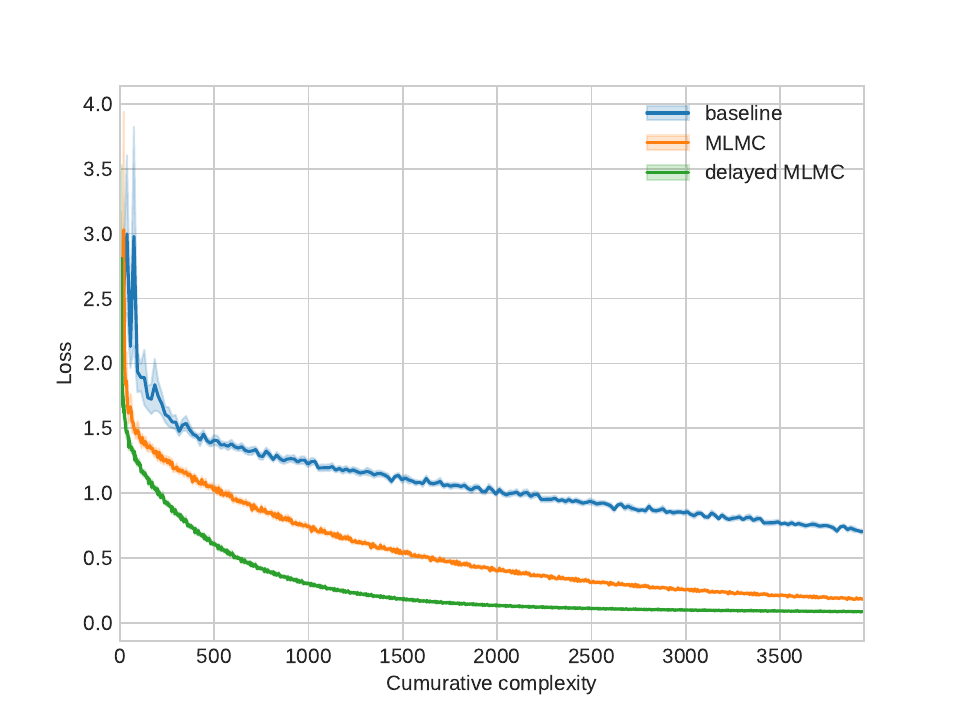}
    \hfill
    \includegraphics[width=70mm, trim={15 10 40 10mm},clip]{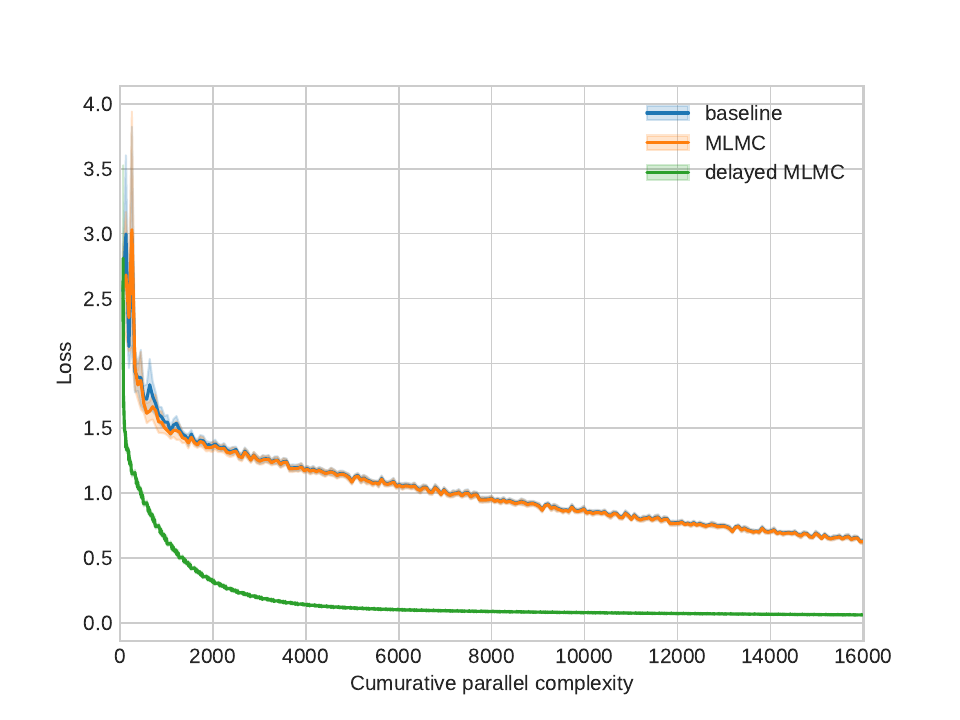}
    \caption{
    The mean and standard deviation of the loss throughout the optimization process for the naive SGD (baseline), the SGD with MLMC, and the SGD with delayed MLMC.
    Standard complexity (left) and parallel complexity (right) are used as the time scale of the learning curves.
    The mean and standard deviation are taken over 10 different runs.
    }
    \label{fig:learning_curve}
    \vspace{-1.5em}
\end{figure}

Figure \ref{fig:learning_curve} illustrates the learning curves for three different optimization methods: the naive SGD (baseline), the SGD with the standard MLMC, and the SGD with delayed MLMC.
All methods employed the same learning rate, and the batch sizes were adjusted to match the gradient variance across methods.
When we consider parallel complexity as the horizontal axis, it becomes evident that the delayed MLMC outperforms both the baseline and the standard MLMC, which aligns with our expectations.
Interestingly, even when assessing performance in terms of standard complexity as the time scale, we observe that the delayed MLMC exhibits slightly faster optimization compared to the standard MLMC.
This improvement can be attributed to the skipped computation of gradients at higher levels, which again demonstrates the effectiveness of the proposed approach.

\section{Conclusions}
We introduced the delayed MLMC gradient estimator to address scalability challenges in MLMC for SGD on massively parallel computers. Our method demonstrates significant parallel complexity improvements in both theory and experiments compared to standard MLMC in SGD. This contribution holds great potential for enhancing the scalability of neural differential equations and related simulations in the fields of machine learning and computational science.

% combination with variance reduction technique SVRG, QMC may be a future work.

% TODO
\section*{Acknowledgement}
This work was greatly inspired by a research project the author conducted under the supervision by Florian Krach and Calypso Herrera at ETH Zurich.
The author also appreciates Takashi Goda at the University of Tokyo for technical advice on the Monte Carlo methods.

\bibliography{ref.bib}

\newpage

\appendix

\section{A Review on the Multilevel Monte Carlo Method}\label{app:mlmc}

\subsection{Literatures on MLMC}
The multilevel Monte Carlo (MLMC) method, initially introduced by \citet{heinrich2001multilevel} for parametric integration, gained substantial recognition following the seminal work by \citet{giles2008multilevel} on path simulation of SDEs.
Its applications have since extended into various domains, including partial differential equations with random coefficients \citep{cliffe2011multilevel}, continuous-time Markov chains \citep{anderson2012multilevel}, and nested simulations \cite{bujok2015multilevel}.
Following its success in the numerical simulations, the MLMC has been extended further to the fields of statistics and machine learning.
In statistics, MLMC has been applied to Markov chain Monte Carlo sampling \citep{dodwell2015hierarchical}, sequential Monte Carlo sampling \citep{beskos2017multilevel}, particle filtering \citep{jasra2017multilevel}. 
The adoption of MLMC in machine learning is a more recent development, with applications ranging from distributionally robust optimization \citep{levy2020large} to Bayesian computation \citep{ishikawa2021efficient, shi2021multilevel, li2023multilevel, chada2022multilevel} and reinforcement learning \citep{dixit2022multilevel, hoerger2023multilevel, zhang2023latent}.
For those interested in a comprehensive review of MLMC, \citet{giles2015multilevel} provides an excellent tutorial and an extensive survey.

\subsection{Optimal Sample Size Allocation of MLMC}
To determine the optimal sample sizes per level, denoted as $N_0, \ldots, N_\ellmax$, we minimize the variance under a fixed total cost.
\footnote{
    Here, we can alternatively minimize the total cost given a constant variance.
}
For the analysis, let us assume that there exist $C, c, M$, and $b$ such that
\begin{align*}
    C_\ell := \mathrm{Complexity}\left[\nabla \Delta_\ell \hat F(x, \xi)\right] = C 2 ^{c\ell}
\end{align*}
and that
\begin{align*}
    V_\ell := \EE\left\|\nabla \Delta_\ell \hat F(x, \xi) - \EE\nabla \Delta_\ell \hat F(x, \xi) \right\|^2 = M 2 ^{-b\ell}.
\end{align*}
Since the variance of the MLMC estimator
$\nabla \hat F_\MLMC  = \sum_{\ell=0}^{\ellmax} \frac{1}{N_\ell} \sum_{n=1}^{N_\ell} \nabla \Delta_\ell \hat F_\ellmax(x, \xi_{\ell,n})$
can be written as $\sum_{\ell=0}^\ellmax \frac{V_\ell}{N_\ell}$,
we can solve the following constrained optimization to obtain the optimal choice of $N_0, \ldots, N_\ellmax$.
\begin{equation*}
    \min_{N_0, \ldots, N_\ellmax > 0} \sum_{\ell=0}^\ellmax \frac{V_\ell}{N_\ell} \text{\ \ \ \ subject to\ \ \ \ }\sum_{\ell=0}^\ellmax C_\ell N_\ell = C_\text{total}.
\end{equation*}
This problem can be solved analytically by using the method of Lagrangian multipliers, yielding the optimal sample sizes: 
\begin{equation*}
    N_\ell = \frac{\sqrt{V_\ell/ C_\ell}}{\sum_{k=0}^\ellmax\sqrt{V_k C_k}} \cdot C_\text{total}
    \propto \sqrt{\frac{V_\ell}{C_\ell}}
    = \Ocal(2^{-(b + c)\ell / 2}).
\end{equation*}
For this solution, the resulting optimal variance is
\begin{equation*}
    \sum_{\ell=0}^\ellmax \frac{V_\ell}{N_\ell} 
    = \frac{1}{C_\text{total}} \cdot \left(\sum_{k=0}^\ellmax\sqrt{V_k C_k}\right)^2
    = \frac{1}{C_\text{total}} \cdot \left(\sum_{k=0}^\ellmax \Ocal(2^{-(b-c)\ell/2})\right)^2
    = \Ocal\left(\frac{1}{C_\text{total}}\right).
\end{equation*}

\section{Theoretical Analysis}\label{app:theoretical_analysis}

Here, we provide a convergence analysis of our algorithm for smooth and non-convex objective $F(x)$. 
To set the stage for the analysis of the delayed MLMC, we first present a well-established result for SGD applied to smooth and non-convex objectives as a reference.

\begin{theorem}[Stochastic Gradient Descent for Non-Convex Functions \citep{bottou2018optimization}]
\label{th:sgd}
Under Assumption \ref{assum:variance} and \ref{assum:smoothness},
suppose we run standard SGD with gradient estimator $\nabla \hat F(x_t, \xi_t)$ and step size $\alpha_t=\alpha_0\leq\frac{1}{L'}$ so that $x_{t+1} \gets x_t - \alpha_t \nabla \hat F(x_t, \xi_t)$ for i.i.d. samples of $\{\xi_t\}_{t=0}^{T-1}$.
Assume the variance of the gradient is bounded so that there exists constant $M_{\nabla \hat F}$ such that
$\EE_{\xi}\|\nabla \hat F(x, \xi) - \nabla F(x)\|^2 \leq M_{\nabla \hat F}$ for any $x$.
Then, we have
\begin{equation*}\label{eq:sgd_convergence}
    \sum_{t=0}^{T-1}\alpha_t\left(1 - \frac{L'}{2}\alpha_t\right)\EE\|\nabla F(x_t)\|^2
    \leq 
    F(x_0) - F_\textinf + \sum_{t=0}^{T-1}\frac{L' \alpha_t^2}{2}M_{\nabla \hat F}.
\end{equation*}
where we introduced $F_\textinf := \inf F(x)$.
\end{theorem}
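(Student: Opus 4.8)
The plan is to prove this standard result via the classical descent-lemma argument, using only $L'$-smoothness of $F$ (which follows from Assumption~\ref{assum:smoothness} as noted in the excerpt) together with the unbiasedness and bounded-variance properties of the estimator $\nabla \hat F(x_t, \xi_t)$. First I would invoke $L'$-smoothness to obtain the quadratic upper bound
\begin{equation*}
    F(x_{t+1}) \leq F(x_t) + \langle \nabla F(x_t), x_{t+1} - x_t \rangle + \frac{L'}{2}\|x_{t+1} - x_t\|^2.
\end{equation*}
Substituting the update rule $x_{t+1} - x_t = -\alpha_t \nabla \hat F(x_t, \xi_t)$ turns the right-hand side into $F(x_t) - \alpha_t \langle \nabla F(x_t), \nabla \hat F(x_t, \xi_t)\rangle + \frac{L'\alpha_t^2}{2}\|\nabla \hat F(x_t, \xi_t)\|^2$.

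Next I would take the conditional expectation over $\xi_t$ given $x_t$ (the samples are i.i.d., so $x_t$ is independent of $\xi_t$). Unbiasedness of the estimator makes the inner-product term collapse to $-\alpha_t \|\nabla F(x_t)\|^2$. For the squared-norm term, I would use the bias-variance decomposition $\EE_\xi\|\nabla \hat F(x_t,\xi_t)\|^2 = \|\nabla F(x_t)\|^2 + \EE_\xi\|\nabla \hat F(x_t,\xi_t) - \nabla F(x_t)\|^2$ and bound the variance piece by $M_{\nabla \hat F}$. This yields
\begin{equation*}
    \EE_{\xi_t}[F(x_{t+1})] \leq F(x_t) - \alpha_t\left(1 - \frac{L'}{2}\alpha_t\right)\|\nabla F(x_t)\|^2 + \frac{L'\alpha_t^2}{2}M_{\nabla \hat F}.
\end{equation*}

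Finally I would rearrange to isolate the gradient-norm term, take the full (tower) expectation, and sum over $t = 0, \ldots, T-1$. The left-hand side telescopes the $F$-increments into $\EE[F(x_0)] - \EE[F(x_T)]$, which is bounded above by $F(x_0) - F_\textinf$ since $F_\textinf = \inf_x F(x) \leq F(x_T)$ almost surely. Moving the gradient terms to one side delivers exactly the stated inequality. The only genuinely delicate point is ensuring that $1 - \frac{L'}{2}\alpha_t \geq 0$ so the rearrangement preserves the inequality direction, which is guaranteed by the hypothesis $\alpha_t = \alpha_0 \leq \frac{1}{L'}$; beyond that, every step is a routine application of smoothness and the law of total expectation, so I anticipate no substantive obstacle.
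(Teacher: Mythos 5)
Your proposal is correct and follows essentially the same route as the paper's proof: the $L'$-smoothness descent lemma, conditional expectation over $\xi_t$ with unbiasedness and the bias--variance decomposition bounded by $M_{\nabla \hat F}$, then telescoping and the bound $\EE F(x_T) \geq F_\textinf$. (One small quibble: the rearrangement you flag as delicate is valid regardless of the sign of $1 - \frac{L'}{2}\alpha_t$, since moving terms across an inequality never flips its direction; the step-size condition only matters for making the left-hand side useful.)
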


\begin{proof}
This proof follows \citet{bottou2018optimization}, theorem 4.10.

Also, by Assumption \ref{assum:smoothness}, we know that $F(x)$ is $L'$-smooth so that 
\begin{align*}
F(x_{t+1}) 
&\leq  F(x_t) + \langle \nabla F(x_t), x_{t+1} - x_t\rangle + \frac{L'}{2} \|x_{t+1}-x_t\|^2 \\
&= F(x_t) + \langle \nabla F(x_t), -\alpha_t \nabla \hat F(x_t, \xi_t)\rangle + \frac{L'}{2} \|-\alpha_t\nabla \hat F(x_t, \xi_t)\|^2.
\end{align*}
By taking the expectation with respect to the stochasticity at time $t$ conditioned on the trajectory up to time $t-1$ (i.e. taking expectation with respect to $\xi_t$) and using Assumption \ref{assum:variance}, we get
\begin{align*}
\EE_{\xi_t} F(x_{t+1}) 
&\leq F(x_t) + \langle \nabla F(x_t), -\alpha_t \nabla F(x_t)\rangle + \frac{L'\alpha_t^2}{2} \left\{\|\nabla F(x_t)\|^2 + \EE_{\xi_t}\|\nabla \hat F(x_t, \xi_t) - \nabla F(x_t)\|^2\right\} \\
&\leq F(x_t) - \alpha_t \left(1 - \frac{L'}{2}\alpha_t\right)\|\nabla F(x_t)\|^2 + \frac{L'\alpha_t^2}{2} M_{\nabla \hat F}.
\end{align*}
By taking the summation of $F(x_t) - \EE_{\xi_t}F(x_{t+1}) + \frac{L'\alpha_t^2}{2} M_{\nabla \hat F}$ for $t=0, \ldots, T-1$ and taking the (non-conditional) expectation, we get
\begin{align*}
    \sum_{t=0}^{T-1}\alpha_t\left(1 - \frac{L'}{2}\alpha_t\right)\EE\|\nabla F(x_t)\|^2
    &\leq F(x_0) - \EE F(x_T) + \sum_{t=0}^{T-1}\frac{\alpha_t^2 L'}{2}M_{\nabla \hat F} \\
    &\leq F(x_0) - F_\textinf + \sum_{t=0}^{T-1}\frac{\alpha_t^2 L'}{2}M_{\nabla \hat F}.
\end{align*}
\end{proof}

\begin{remark}
Here, we can substitute $M_{\nabla \hat F}$ with the upper bound on the variance of $\nabla \hat F_\naive$ and $\nabla \hat F_\MLMC$ to obtain the convergence rate in Table \ref{tab:comparison}.
Under Assumption \ref{assum:variance}, the variance of the naive Monte Carlo estimator can be bounded as
\begin{align*}
\EE_{\xi}\|\nabla \hat F_\naive(x) - \nabla F(x)\|^2 
&\leq \frac{1}{N} \EE_{\xi}\|\nabla \hat F_\ellmax(x, \xi) - \nabla F_\ellmax(x)\|^2  \\
&\leq \frac{\ellmax + 1}{N} \sum_{\ell=0}^\ellmax \EE_{\xi}\|\nabla \Delta_\ell \hat F(x, \xi) - \nabla \Delta_\ell F(x)\|^2 \\
&\leq \frac{\ellmax + 1}{N} \sum_{\ell=0}^\ellmax \EE_{\xi}\|\nabla \Delta_\ell \hat F(x, \xi)\|^2 \\
&\leq \frac{(\ellmax + 1)M}{N} \sum_{\ell=0}^\ellmax 2^{-b\ell} \\
&\leq \Ocal\left(\frac{M}{N}\ellmax \right).
\end{align*}
Similarly, the variance of the standard MLMC estimator can be bounded as 
\begin{align*}
\EE_{\xi}\|\nabla \hat F_\MLMC(x) - \nabla F(x)\|^2 
&= \sum_{\ell=0}^\ellmax \EE_{\xi}\|\nabla \Delta_\ell \hat F_\MLMC(x) - \nabla \Delta_\ell F(x)\|^2 \\
&= \sum_{\ell=0}^\ellmax \frac{1}{N_\ell}\EE_{\xi}\|\nabla \Delta_\ell \hat F(x) - \nabla \Delta_\ell F(x)\|^2 \\
&= \sum_{\ell=0}^\ellmax \left\lceil \frac{2^{-(b + c)\ell / 2}}{\sum_{\ell=0}^\ellmax 2^{-(b + c)\ell / 2}}\cdot N\right\rceil^{-1} \cdot 2 ^{-b\ell} M \\
&\leq \frac{M}{N}
    \left(\sum_{\ell=0}^\ellmax 2^{-(b + c)\ell / 2}\right)
    \left(\sum_{\ell=0}^\ellmax 2^{-(b - c)\ell / 2}\right) \\
&=\Ocal\left(\frac{M}{N}\right),
\end{align*}
where we used the mutual independence of coupled estimators at different levels at the first line.
For notational convenience, we let
$M' :=
\frac{M}{N}
    \left(\sum_{\ell=0}^\ellmax 2^{-(b + c)\ell / 2}\right)
    \left(\sum_{\ell=0}^\ellmax 2^{-(b - c)\ell / 2}\right)
= M_{\nabla \hat F_\MLMC}
$
for the rest of the paper to represent the upper bound on the variance of the MLMC gradient estimator.
\end{remark}

Now, to study the convergence property of the delayed MLMC, we analyze the convergence of SGD with a general biased gradient estimator.

\begin{lemma}[Biased Stochastic Gradient Descent for Non-Convex Functions]\label{th:biased_sgd}
Under Assumption \ref{assum:variance} and \ref{assum:smoothness}, suppose we run SGD with a biased gradient estimator $\nabla \hat F^{(t)}_\biased = \nabla \hat F_\biased (x_t, \xi_{0:t}) $ depending on the information up to time $t$.
Assume the variance of the biased gradient is bounded so that there exists constant $M_{\nabla \hat F_\biased}$ such that
$\EE_{\xi_t}\|\nabla \hat F^{(t)}_\biased - \EE_{\xi_t}\hat F^{(t)}_\biased \|^2 \leq M_{\nabla \hat F_\biased}$ holds for any $x_0, t$, and $\xi_0, \ldots, \xi_{t-1}$.
Then, we have
\begin{align*}
    &\sum_{t=0}^{T-1}\alpha_t\left(\frac{1}{2} - L'\alpha_t\right)\EE\|\nabla F(x_t)\|^2 \\
    &\leq F(x_0) - F_\textinf + \sum_{t=0}^{T-1} \left[
    \alpha_t \left(\frac{1}{2} + L'\alpha_t\right) \EE \| \nabla F(x_t) 
    - \nabla \hat F^{(t)}_\biased \|^2
    + \frac{L'M_{\nabla \hat F_\biased}\alpha_t^2}{2}
    \right].
\end{align*}
\end{lemma}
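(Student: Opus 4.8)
The plan is to mirror the unbiased analysis of Theorem~\ref{th:sgd}, but to carefully separate the systematic bias of the estimator from its stochastic fluctuation. Writing $g_t := \EE_{\xi_t}\nabla \hat F^{(t)}_\biased$ for the conditional mean direction given the history $\xi_0,\ldots,\xi_{t-1}$, I would first invoke the $L'$-smoothness of $F$ (established from Assumption~\ref{assum:smoothness}) together with the update $x_{t+1} = x_t - \alpha_t \nabla \hat F^{(t)}_\biased$ to obtain the descent inequality $F(x_{t+1}) \leq F(x_t) - \alpha_t\langle \nabla F(x_t), \nabla \hat F^{(t)}_\biased\rangle + \tfrac{L'\alpha_t^2}{2}\|\nabla \hat F^{(t)}_\biased\|^2$. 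Taking the expectation over $\xi_t$ conditioned on the past turns the inner product into $\langle \nabla F(x_t), g_t\rangle$, and the bias--variance decomposition $\EE_{\xi_t}\|\nabla \hat F^{(t)}_\biased\|^2 = \|g_t\|^2 + \VV_{\xi_t}[\nabla \hat F^{(t)}_\biased]$ lets me replace the quadratic term by $\|g_t\|^2 + M_{\nabla \hat F_\biased}$ using the assumed variance bound.

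The heart of the argument is then purely algebraic. I would bound the cross term by Young's inequality, $-\langle \nabla F(x_t), g_t\rangle \leq -\tfrac12\|\nabla F(x_t)\|^2 + \tfrac12\|\nabla F(x_t) - g_t\|^2$, and the second-order term by the elementary inequality $\|g_t\|^2 \leq 2\|\nabla F(x_t)\|^2 + 2\|\nabla F(x_t) - g_t\|^2$. Collecting the $\|\nabla F(x_t)\|^2$ contributions yields, after moving them to the left, the coefficient $\tfrac{\alpha_t}{2} - L'\alpha_t^2 = \alpha_t(\tfrac12 - L'\alpha_t)$, while the bias contributions assemble into $\alpha_t(\tfrac12 + L'\alpha_t)\|\nabla F(x_t) - g_t\|^2$; this asymmetry between the two coefficients is precisely what the second-order term forces. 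Finally, since $g_t$ is a conditional expectation, Jensen's inequality gives $\|\nabla F(x_t) - g_t\|^2 = \|\EE_{\xi_t}[\nabla F(x_t) - \nabla \hat F^{(t)}_\biased]\|^2 \leq \EE_{\xi_t}\|\nabla F(x_t) - \nabla \hat F^{(t)}_\biased\|^2$, upgrading the conditional-mean bias to the full expected squared discrepancy appearing in the statement.

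To conclude, I would rearrange into the per-step bound $\alpha_t(\tfrac12 - L'\alpha_t)\|\nabla F(x_t)\|^2 \leq F(x_t) - \EE_{\xi_t}F(x_{t+1}) + \alpha_t(\tfrac12 + L'\alpha_t)\EE_{\xi_t}\|\nabla F(x_t) - \nabla \hat F^{(t)}_\biased\|^2 + \tfrac{L'\alpha_t^2}{2}M_{\nabla \hat F_\biased}$, take the total expectation, and sum over $t = 0,\ldots,T-1$ so that the $F$-terms telescope to $F(x_0) - \EE F(x_T) \leq F(x_0) - F_\textinf$. I expect the main subtlety---rather than a genuine obstacle---to be the careful conditioning: one must keep $\nabla F(x_t)$ and $g_t$ deterministic given the past when taking $\EE_{\xi_t}$, apply the variance bound at the right moment, and track that the two distinct quantities $\|\nabla F(x_t)-g_t\|^2$ and $\EE_{\xi_t}\|\nabla F(x_t)-\nabla \hat F^{(t)}_\biased\|^2$ are reconciled only at the very end via Jensen, since prematurely identifying them would either weaken or invalidate the bound.
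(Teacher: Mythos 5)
Your proposal is correct and follows essentially the same route as the paper's proof: the $L'$-smooth descent inequality, conditioning on the past, the bias--variance split of $\EE_{\xi_t}\|\nabla \hat F^{(t)}_\biased\|^2$, Young's inequality on the cross term, Jensen's inequality to pass from $\|\nabla F(x_t)-g_t\|^2$ to $\EE_{\xi_t}\|\nabla F(x_t)-\nabla \hat F^{(t)}_\biased\|^2$, and telescoping. The only cosmetic difference is that you bound $\|g_t\|^2 \leq 2\|\nabla F(x_t)\|^2 + 2\|\nabla F(x_t)-g_t\|^2$ directly, whereas the paper reaches the identical bound through a longer difference-of-squares, Cauchy--Schwarz, and triangle-inequality chain.
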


\begin{proof}
We can use a similar argument to Theorem \ref{th:sgd} and get
\begin{align*}
    &\EE_{\xi_t} F(x_{t+1}) \\
    &\leq F(x_t) -\alpha_t \langle \nabla F(x_t), \EE_{\xi_t}\nabla \hat F^{(t)}_\biased\rangle + \frac{L'}{2} \EE_{\xi_t}\|-\alpha_t\nabla \hat F^{(t)}_\biased\|^2 
    \\
    &= F(x_t) -\alpha_t \| \nabla F(x_t)\|^2 
    + \alpha_t \langle \nabla F(x_t), \nabla F(x_t) - \EE_{\xi_t}\nabla \hat F^{(t)}_\biased \rangle \\
    &\quad \quad \quad \quad + \frac{L'\alpha_t^2}{2}
    \left\{\|\EE_{\xi_t}\nabla \hat F^{(t)}_\biased\|^2  + \EE_{\xi_t}\|\nabla \hat F^{(t)}_\biased - \EE_{\xi_t}\nabla \hat F^{(t)}_\biased \|^2\right\} 
    \\
    &\stackrel{2\langle v, u\rangle \leq \|v\|^2 + \|u\|^2}{\leq} F(x_t) - \alpha_t \left(1 - \frac{L'}{2}\alpha_t\right) \| \nabla F(x_t)\|^2 
    + \frac{\alpha_t}{2}\left\{
    \|\nabla F(x_t)\|^2 + \| \nabla F(x_t) - \EE_{\xi_t}\nabla \hat F^{(t)}_\biased \|^2 
    \right\} \\
    &\quad \quad \quad \quad + \frac{L'\alpha_t^2}{2}
    \left\{\|\EE_{\xi_t}\nabla \hat F_\biased^{(t)}\|^2  - \|\nabla F(x_t)\|^2 
    + M_{\nabla \hat F_\biased} \right\} 
    \\
    &= F(x_t) - \alpha_t \left(\frac{1}{2} - \frac{L'}{2}\alpha_t\right) \| \nabla F(x_t)\|^2 
    + \frac{\alpha_t}{2} \| \nabla F(x_t) - \EE_{\xi_t}\nabla \hat F^{(t)}_\biased \|^2 \\
    &\quad \quad \quad \quad + \frac{L'\alpha_t^2}{2}
    \left\{
    \left\langle \nabla F(x_t) + \EE_{\xi_t}\nabla \hat F^{(t)}_\biased, 
    \nabla F(x_t) - \EE_{\xi_t}\nabla \hat F^{(t)}_\biased \right\rangle
    + M_{\nabla \hat F_\biased} \right\} 
    \\
    &\stackrel{\text{Cauchy-Schwarz}}{\leq} F(x_t) - \alpha_t \left(\frac{1}{2} - \frac{L'}{2}\alpha_t\right) \| \nabla F(x_t)\|^2 
    + \frac{\alpha_t}{2} \| \nabla F(x_t) - \EE_{\xi_t}\nabla \hat F^{(t)}_\biased \|^2 \\
    &\quad \quad \quad \quad + \frac{L'\alpha_t^2}{2}
    \left\{
    \|\nabla F(x_t) + \EE_{\xi_t}\nabla \hat F^{(t)}_\biased\| \cdot
    \|\nabla F(x_t) - \EE_{\xi_t}\nabla \hat F^{(t)}_\biased\|
    + M_{\nabla \hat F_\biased} \right\}
    \\
    &\stackrel{\text{triangular ineq.}}{\leq} F(x_t) - \alpha_t \left(\frac{1}{2} - \frac{L'}{2}\alpha_t\right) \| \nabla F(x_t)\|^2 
    + \frac{\alpha_t}{2} \| \nabla F(x_t) - \EE_{\xi_t}\nabla \hat F^{(t)}_\biased \|^2 \\
    &\quad \quad \quad \quad + \frac{L'\alpha_t^2}{2}
    \left\{
     \|2\nabla F(x_t)\| \cdot \|\nabla F(x_t) - \EE_{\xi_t}\nabla \hat F^{(t)}_\biased\| + 
    \|\nabla F(x_t) - \EE_{\xi_t}\nabla \hat F^{(t)}_\biased\|^2
    + M_{\nabla \hat F_\biased} \right\}
    \\
    &\stackrel{2ab \leq a^2 + b^2}{\leq} F(x_t) - \alpha_t \left(\frac{1}{2} - \frac{L'}{2}\alpha_t\right) \| \nabla F(x_t)\|^2 
    + \frac{\alpha_t}{2} \| \nabla F(x_t) - \EE_{\xi_t}\nabla \hat F^{(t)}_\biased \|^2 \\
    &\quad \quad \quad \quad + \frac{L'\alpha_t^2}{2}
    \left\{
    \|\nabla F(x_t)\|^2 + 2 \|\nabla F(x_t) - \EE_{\xi_t}\nabla \hat F^{(t)}_\biased\|^2
    + M_{\nabla \hat F_\biased} \right\}
    \\
    &= F(x_t) -\alpha_t\left(\frac{1}{2} - L'\alpha_t\right) \| \nabla F(x_t)\|^2 
    + \alpha_t \left(\frac{1}{2} + L'\alpha_t\right)
    \| \nabla F(x_t) - \EE_{\xi_t}\nabla \hat F^{(t)}_\biased \|^2
    + \frac{L'M_{\nabla \hat F_\biased}\alpha_t^2}{2}
    \\
    &\stackrel{\text{Jensen's ineq.}}{\leq} F(x_t) -\alpha_t\left(\frac{1}{2} - L'\alpha_t\right) \| \nabla F(x_t)\|^2 
    + \alpha_t \left(\frac{1}{2} + L'\alpha_t\right)
    \| \nabla F(x_t) - \nabla \hat F^{(t)}_\biased \|^2
    + \frac{L'M_{\nabla \hat F_\biased}\alpha_t^2}{2}.
\end{align*}
By taking the summation of the above inequalities from $t=0, \ldots, T-1$, we get the main statement.
\end{proof}

To utilize the above lemma, we derive the upper bound on the bias term in the following.

\begin{lemma}[Bounded Bias in Delayed MLMC Gradient 1]\label{lemma:bounded_error_1}
Under Assumption \ref{assum:variance} and \ref{assum:smoothness}, suppose we run SGD with delayed MLMC gradient estimator $\nabla \hat F^{(t)}_\DMLMC$ as in Algorithm \ref{algo:delayed_mlmc}.
Then, we have 
\begin{align*}
\EE\|\nabla F(x_t) - \nabla \hat F^{(t)}_\DMLMC \|^2
&\leq 
\left\{
(\ellmax + 1) 
+ 2 \cdot \left(
    \sum_{\ell=0}^{\ellmax} 2^{-d\ell} L \sum_{r=\tau_\ell(t)}^{t-1} \alpha_r
\right)
\right\}
\\
&\quad \quad 
\times \left\{
    M' 
    + \sum_{\ell=0}^{\ellmax} 
    2^{-d\ell}L \sum_{s=\tau_\ell(t)}^{t-1} \alpha_s 
    \left(
    \EE \| \nabla F(x_s) - \nabla \hat F^{(s)}_\DMLMC\|^2
    + \EE \| \nabla F(x_s)\|^2
    \right)
\right\}.
\end{align*}
\end{lemma}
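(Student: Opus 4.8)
The plan is to write the full error $\nabla F(x_t) - \nabla \hat F^{(t)}_\DMLMC$ as a single flat list of elementary vectors and then apply \emph{one} weighted Cauchy--Schwarz inequality whose weights are rigged to reproduce the two factors in the statement. Since $F = \sum_{\ell=0}^{\ellmax}\Delta_\ell F$, I would first split each level's contribution into a deterministic ``delay'' part and a mean-zero ``sampling'' part:
\begin{equation*}
\nabla F(x_t) - \nabla \hat F^{(t)}_\DMLMC = \sum_{\ell=0}^{\ellmax}\underbrace{\left(\nabla\Delta_\ell F(x_t) - \nabla\Delta_\ell F(x_{\tau_\ell(t)})\right)}_{=:A_\ell} + \sum_{\ell=0}^{\ellmax}\underbrace{\left(\nabla\Delta_\ell F(x_{\tau_\ell(t)}) - \nabla\Delta_\ell \hat F_\MLMC(x_{\tau_\ell(t)}, \bsxi_{\tau_\ell(t),\ell})\right)}_{=:B_\ell}.
\end{equation*}
I would then telescope the delay part across the skipped steps, $A_\ell = \sum_{s=\tau_\ell(t)}^{t-1} D_{\ell,s}$ with $D_{\ell,s}:=\nabla\Delta_\ell F(x_{s+1}) - \nabla\Delta_\ell F(x_s)$, so the entire error is the list of vectors $\{B_\ell\}_\ell \cup \{D_{\ell,s}\}_{\ell,s}$.

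Next I would invoke the elementary bound $\|\sum_i v_i\|^2 \le \left(\sum_i q_i\right)\left(\sum_i \|v_i\|^2/q_i\right)$ with the deterministic weights $q=1$ for each $B_\ell$ and $q_{\ell,s}=2\cdot 2^{-d\ell}L\alpha_s$ for each $D_{\ell,s}$; these are legitimate constants since $\tau_\ell(t)$ is fixed given $t$ and the step sizes are non-adaptive. The total weight collapses to exactly $(\ellmax+1) + 2\sum_{\ell=0}^{\ellmax} 2^{-d\ell}L\sum_{r=\tau_\ell(t)}^{t-1}\alpha_r$, which is the first factor in the claim.

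For the second factor I would bound the two groups of $\|v_i\|^2/q_i$ terms separately after taking expectation. For the sampling terms, linearity gives $\EE\sum_\ell\|B_\ell\|^2 = \sum_\ell \EE\|B_\ell\|^2$; conditioning on $x_{\tau_\ell(t)}$ (independent of the fresh draws $\bsxi_{\tau_\ell(t),\ell}$) and using the iid averaging over $N_\ell$ samples together with Assumption \ref{assum:variance} yields $\EE\|B_\ell\|^2 \le 2^{-b\ell}M/N_\ell$, whose sum over $\ell$ is at most $M'$ by the same computation that produced the MLMC variance bound. For the delay terms, Assumption \ref{assum:smoothness} and the update rule $x_{s+1}-x_s = -\alpha_s\nabla\hat F^{(s)}_\DMLMC$ give $\|D_{\ell,s}\| \le 2^{-d\ell}L\alpha_s\|\nabla\hat F^{(s)}_\DMLMC\|$, hence $\|D_{\ell,s}\|^2/q_{\ell,s} \le \tfrac12 2^{-d\ell}L\alpha_s\|\nabla\hat F^{(s)}_\DMLMC\|^2$; splitting $\|\nabla\hat F^{(s)}_\DMLMC\|^2 \le 2\|\nabla F(x_s)\|^2 + 2\|\nabla F(x_s)-\nabla\hat F^{(s)}_\DMLMC\|^2$ and taking expectations recovers the remaining sum in the second factor. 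Multiplying the two factors gives the statement.

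The main difficulty here is organizational rather than analytic: the weights must be chosen so that the ``sum of weights'' telescopes to precisely $(\ellmax+1) + 2\sum_\ell 2^{-d\ell}L\sum_r\alpha_r$ while the ``sum of squared norms over weights'' telescopes to precisely $M'$ plus the recursive term; in particular the factor of $2$ from the $\|a+b\|^2\le 2\|a\|^2+2\|b\|^2$ split must cancel against the $1/q_{\ell,s}$ normalization. The one subtle point is that the Cauchy--Schwarz step does \emph{not} use cross-level orthogonality of the $B_\ell$; that structure enters only afterwards, through linearity of expectation and the per-level iid variance bound, in order to reach $M'$.
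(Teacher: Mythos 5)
Your proposal is correct and follows essentially the same route as the paper: the same split into a sampling term and a delay term per level, smoothness plus the update rule to bound the delay term by $2^{-d\ell}L\alpha_s\|\nabla\hat F^{(s)}_\DMLMC\|$, a single weighted Cauchy--Schwarz with weights rigged to produce the two stated factors, and the MLMC variance bound $M'$ for the sampling part. The only (cosmetic) differences are that you telescope the delay term into vector increments and split $\|\nabla\hat F^{(s)}_\DMLMC\|^2$ after Cauchy--Schwarz, whereas the paper first passes to scalar norms via the triangle inequality and splits before applying the inequality; both yield the identical bound.
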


%%% Notes on the proof:
% 
% Maybe we can use "Cauchy-Schwarz in R^d" 
% | \sum_i a_i x_i |^2 <= (\sum_i a_i )・(\sum a_i |x_i|^2 )
% to obtain the last line.
% But smoothness part makes it hard to apply naive Cauchy-Schwarz!
% 
\begin{proof}
We will first decompose the norm of the bias as
\begin{align*}
&\|\nabla F(x_t) - \nabla \hat F^{(t)}_\DMLMC \| 
\\
&\quad = \left\| \sum_{\ell=0}^{\ellmax} \left[ \nabla \Delta_\ell F(x_t) -  \nabla \Delta_\ell \hat F_\MLMC(x_{\tau_\ell(t)}, \bsxi_{\tau_\ell(t), \ell})\right]\right\|
\\
&\quad \leq \sum_{\ell=0}^{\ellmax} \left\| \nabla \Delta_\ell F(x_t) -  \nabla \Delta_\ell \hat F_\MLMC(x_{\tau_\ell(t)}, \bsxi_{\tau_\ell(t), \ell})\right\|
\\
&\quad \leq \sum_{\ell=0}^{\ellmax} \left\{
\left\| \nabla \Delta_\ell F(x_t) - \nabla \Delta_\ell F(x_{\tau_\ell(t)}) \right\|
+ \left\| \nabla \Delta_\ell F(x_{\tau_\ell(t)}) -  \nabla \Delta_\ell \hat F_\MLMC(x_{\tau_\ell(t)}, \bsxi_{\tau_\ell(t), \ell})\right\| 
\right\}.
\end{align*}
By Assumption \ref{assum:smoothness}, we can bound the second term in the summation as
\begin{align*}
    \left\| \nabla \Delta_\ell F(x_t) - \nabla \Delta_\ell F(x_{\tau_\ell(t)}) \right\|
    \leq 2^{-d\ell}L\|x_t - x_{\tau_\ell(t)}\|
\end{align*}
and we can use the triangular inequality as
\begin{align*}
    \|x_t - x_{\tau_\ell(t)}\|
    &\leq \sum_{s=\tau_\ell(t)}^{t-1} \|x_{s+1} - x_s\| \\
    &\leq \sum_{s=\tau_\ell(t)}^{t-1} \|- \alpha_s \nabla \hat F^{(s)}_\DMLMC\| \\
    &\leq \sum_{s=\tau_\ell(t)}^{t-1} \alpha_s \left(
    \| \nabla F(x_s) - \nabla \hat F^{(s)}_\DMLMC\| + \| \nabla F(x_s)\| 
    \right).
\end{align*}

Here, we use a modified version of Cauchy-Schwarz inequality $(\sum_i u_i v_i)^2 \leq (\sum_i u_i)^2 (\sum_i v_i^2)$.
For non-negative $a_i$ and any $x_i$, we can show that
$$\left(\sum_i a_i x_i\right)^2 \leq \left( \sum_i a_i \right) \left( \sum_i a_i x_i^2 \right)$$
holds, by substituting $u_i=\sqrt{a_i}$ and $v_i=\sqrt{a_i}x_i$ to above.
Letting $x_i$'s and $a_i$'s be the norms and their coefficients, we get
\begin{align*}
&\EE\|\nabla F(x_t) - \nabla \hat F^{(t)}_\DMLMC \|^2 \\
&\leq \EE \Biggl| \sum_{\ell=0}^{\ellmax} \Biggl\{
\left\| \nabla \Delta_\ell F(x_{\tau_\ell(t)}) -  \nabla \Delta_\ell \hat F_\MLMC(x_{\tau_\ell(t)}, \bsxi_{\tau_\ell(t), \ell})\right\| \\
&\quad \quad + 2^{-d\ell}L \sum_{s=\tau_\ell(t)}^{t-1} \alpha_s \left(
    \| \nabla F(x_s) - \nabla \hat F^{(s)}_\DMLMC\| + \| \nabla F(x_s)\| 
    \right)
\Biggr\}
\Biggr|^2
\\
&\leq \left[
    \sum_{\ell=0}^{\ellmax} \left\{
        1 + 2^{-d\ell}L \sum_{s=\tau_\ell(t)}^{t-1} \alpha_s (1+1)
    \right\}
\right]
\\ & \quad \quad \times
\EE \Biggl[ \sum_{\ell=0}^{\ellmax} \Biggl\{
\left\| \nabla \Delta_\ell F(x_{\tau_\ell(t)}) -  \nabla \Delta_\ell \hat F_\MLMC(x_{\tau_\ell(t)}, \bsxi_{\tau_\ell(t), \ell})\right\| ^2 \\
&\quad \quad \quad \quad + 2^{-d\ell}L \sum_{s=\tau_\ell(t)}^{t-1} \alpha_s \left(
        \| \nabla F(x_s) - \nabla \hat F^{(s)}_\DMLMC\|^2
        + \| \nabla F(x_s)\|^2 
    \right)
\Biggr\}
\Biggr]
\\
&\leq \left\{
(\ellmax + 1) 
+ 2 \cdot \left(
    \sum_{\ell=0}^{\ellmax} 2^{-d\ell} L \sum_{s=\tau_\ell(t)}^{t-1} \alpha_s
\right)
\right\}
\\
&\quad \quad 
\times \left\{
    M' 
    + \sum_{\ell=0}^{\ellmax} 
    2^{-d\ell}L \sum_{s=\tau_\ell(t)}^{t-1} \alpha_s 
    \left(
    \EE \| \nabla F(x_s) - \nabla \hat F^{(s)}_\DMLMC\|^2
    + \EE \| \nabla F(x_s)\|^2
    \right)
\right\},
\end{align*}
which concludes the proof.
\end{proof}

Now, we derive a recursive formula of the upper bound from this lemma and obtain a more concrete bound on the bias.

\begin{lemma}[Bounded Bias in Delayed MLMC Gradient 2]\label{lemma:bounded_error_2}
Under Assumption \ref{assum:variance} and \ref{assum:smoothness}, suppose we run SGD with delayed MLMC gradient estimator $\nabla \hat F^{(t)}_\DMLMC$ as in Algorithm \ref{algo:delayed_mlmc}.
Additionally, assume that step sizes are chosen as $\alpha_t < \frac{\beta}{L}$ for $\beta$ satisfying 
$0 < \beta \leq \frac{1}{4(\ellmax + 1)\left(\sum_{\ell=0}^\infty2^{-d\ell}\right)\log(2T+1)} \leq \frac{1}{2}$.\footnote{Strictly speaking, we need to assume $T\geq 2$ for the third inequality.}
Then, there exist $K_1, K_2 > 0$ such that
\begin{align}
\EE\|\nabla F(x_t) - \nabla \hat F^{(t)}_\DMLMC \|^2
\leq K_1 M' + K_2\sum_{s=0}^{t - 1} \frac{1}{t - s} \EE \|\nabla F(x_s)\|^2
\label{eq:delayed_gradient_bias_bound}
\end{align}
for $t=0, \ldots, T-1$.
\end{lemma}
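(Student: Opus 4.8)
The plan is to turn the implicit, recursive bound of Lemma~\ref{lemma:bounded_error_1} into the explicit single-index recursion claimed here, and then to solve that recursion by strong induction on $t$. Throughout I write $e_t := \EE\|\nabla F(x_t) - \nabla \hat F^{(t)}_\DMLMC\|^2$ and $g_t := \EE\|\nabla F(x_t)\|^2$, and set $S_d := \sum_{\ell=0}^{\infty} 2^{-d\ell}$. The first step is to control the prefactor in Lemma~\ref{lemma:bounded_error_1}. Since each window $[\tau_\ell(t), t-1]$ has length $t - \tau_\ell(t) \le \lfloor 2^{d\ell}\rfloor \le 2^{d\ell}$ and the step sizes satisfy $L\alpha_s < \beta$, I would bound $2^{-d\ell} L \sum_{s=\tau_\ell(t)}^{t-1}\alpha_s \le 2^{-d\ell}\beta\, 2^{d\ell} = \beta$ at every level, so the prefactor is at most $(\ellmax+1)(1+2\beta) \le 2(\ellmax+1) =: A$, using $\beta \le \tfrac12$. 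This replaces the product bound by $e_t \le A\bigl(M' + \sum_{\ell=0}^{\ellmax} 2^{-d\ell} L \sum_{s=\tau_\ell(t)}^{t-1}\alpha_s (e_s + g_s)\bigr)$.

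The second step collapses the nested level/time sum into a single sum with harmonic weights $\tfrac{1}{t-s}$. Using $L\alpha_s < \beta$ and interchanging summation order, the inner double sum is at most $\beta \sum_{s=0}^{t-1}(e_s+g_s)\sum_{\ell:\,\tau_\ell(t)\le s} 2^{-d\ell}$. The key observation is that level $\ell$ can reach back to time $s$ only when $t - s \le 2^{d\ell}$, i.e. only for $\ell \ge \tfrac{1}{d}\log_2(t-s)$; summing that geometric tail gives $\sum_{\ell \ge \frac{1}{d}\log_2(t-s)} 2^{-d\ell} \le S_d\, 2^{-\log_2(t-s)} = \tfrac{S_d}{t-s}$. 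I thereby obtain the clean recursion $e_t \le A M' + c_0 \sum_{s=0}^{t-1} \tfrac{1}{t-s}(e_s + g_s)$ with $c_0 := A\beta S_d$, and the step-size hypothesis $\beta \le \bigl[4(\ellmax+1)S_d\log(2T+1)\bigr]^{-1}$ is calibrated precisely so that $c_0 \le \tfrac{1}{2\log(2T+1)}$.

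The third and decisive step removes the self-reference to $e_s$ on the right. I would use strong induction, positing $e_s \le K_1 M' + K_2\sum_{r=0}^{s-1}\tfrac{1}{s-r} g_r$ for all $s<t$ and substituting. The crucial computation is the harmonic convolution $\sum_{s=r+1}^{t-1}\tfrac{1}{(t-s)(s-r)}$, which by partial fractions equals $\tfrac{2H_{t-r-1}}{t-r}$ with $H_k := \sum_{j=1}^{k}\tfrac1j$; bounding $H_{t-r-1}\le \log(2T+1)$ (this is where $T\ge 2$ enters, since the harmonic sum must stay below $\log(2T+1)$ over the relevant range) converts the double sum back into $\tfrac{1}{t-r}$-weighted single sums. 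Collecting terms gives $e_t \le (A + c_0 K_1 H) M' + (c_0 + 2 c_0 K_2 H)\sum_{s=0}^{t-1}\tfrac{1}{t-s} g_s$ with $H := \log(2T+1)$, so the induction closes provided $A + c_0 K_1 H \le K_1$ and $c_0 + 2c_0 K_2 H \le K_2$. Since $c_0 H \le \tfrac12$, taking $K_1 = 2A = 4(\ellmax+1)$ handles the first; the strict inequality $2c_0 H < 1$ (coming from $\alpha_s < \beta/L$ being strict together with $H_{t-r-1} < \log(2T+1)$ in range) then lets me take any $K_2 \ge \tfrac{c_0}{1 - 2c_0 H}$ for the second. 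The base case $t=0$ holds because all windows are empty, so Lemma~\ref{lemma:bounded_error_1} gives $e_0 \le (\ellmax+1)M' \le K_1 M'$.

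The main obstacle is exactly this last step: Lemma~\ref{lemma:bounded_error_1} is implicit, with the bias $e_s$ at earlier times feeding into the bias at time $t$, so one cannot read off an explicit estimate directly. The two ingredients that make the recursion solvable are (i) the harmonic convolution identity, which shows that composing two $\tfrac{1}{\text{gap}}$ kernels costs only a factor $2H$ rather than blowing up, and (ii) the smallness $c_0 H \le \tfrac12$ forced by the step-size schedule, which keeps the feedback coefficient $2c_0 H$ strictly below $1$ so that $K_1,K_2$ can be chosen uniformly in $t$. Checking that the harmonic sums really stay below $\log(2T+1)$ for all $t \le T-1$ (the source of the $T\ge 2$ caveat) is the one place where the constants must be tracked with care.
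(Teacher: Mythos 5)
Your proposal is correct and follows essentially the same route as the paper: bound the prefactor by $\beta$ per level using the window length $\lfloor 2^{d\ell}\rfloor$, re-index the level/time double sum via the observation that level $\ell$ reaches back to $s$ only when $\ell \ge \log_2(t-s)/d$ to obtain the $\tfrac{1}{t-s}$ kernel, and close a strong induction using the harmonic convolution bound $\sum_{s}\tfrac{1}{(t-s)(s-r)} \le \tfrac{2\log(2T+1)}{t-r}$ together with the smallness of $\beta$. The only cosmetic differences are that you derive the convolution bound by partial fractions and a harmonic-number estimate where the paper uses a Jensen/integral comparison, and you fix $K_1 = 4(\ellmax+1)$ directly rather than via the paper's fixed-point formula; both yield the same constants.
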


\begin{proof}
We first re-write the inequality in Lemma \ref{lemma:bounded_error_1} into a recursion as follows:
\begin{align}
&\EE\|\nabla F(x_t) - \nabla \hat F^{(t)}_\DMLMC \|^2 \nonumber
\\
&\leq 
\left\{
(\ellmax + 1) 
+ 2 \cdot \left(
    \sum_{\ell=0}^{\ellmax} 2^{-d\ell} L \sum_{r=\tau_\ell(t)}^{t-1} \alpha_r
\right)
\right\} \nonumber
\\
&\quad \quad 
\times \left\{
    M' 
    + \sum_{\ell=0}^{\ellmax} 
    2^{-d\ell}L \sum_{s=\tau_\ell(t)}^{t-1} \alpha_s 
    \left(
    \EE \| \nabla F(x_s) - \nabla \hat F^{(s)}_\DMLMC\|^2
    + \EE \| \nabla F(x_s)\|^2
    \right)
\right\} \nonumber 
\\
&\leq 
(2\beta + 1) (\ellmax + 1) \cdot \left\{
    M' 
    + \beta \cdot \sum_{\ell=0}^{\ellmax} 
    2^{-d\ell} \sum_{s=\tau_\ell(t)}^{t-1} 
    \left(
    \EE \| \nabla F(x_s) - \nabla \hat F^{(s)}_\DMLMC\|^2
    + \EE \| \nabla F(x_s)\|^2
    \right)
\right\} \nonumber
\\
&=
(2\beta + 1) (\ellmax + 1) M' + 
\beta (2\beta + 1) (\ellmax + 1) 
\sum_{\ell=0}^{\ellmax} \sum_{s=t - \lfloor2^{d\ell}\rfloor}^{t-1} 
    2^{-d\ell} 
    \left(
    \EE \| \nabla F(x_s) - \nabla \hat F^{(s)}_\DMLMC\|^2
    + \EE \| \nabla F(x_s)\|^2
    \right) \nonumber
\\
&\stackrel{\text{re-index}}{\leq}
(2\beta + 1) (\ellmax + 1) M' \nonumber
\\
&\quad\quad\quad\quad\quad + \beta (2\beta + 1) (\ellmax + 1) 
\sum_{s=0}^{t-1} 
\sum_{\ell=\left\lceil \frac{\log_2(t-s))}{d}\right\rceil}^{\infty} 
    2^{-d\ell} 
    \left(
    \EE \| \nabla F(x_s) - \nabla \hat F^{(s)}_\DMLMC\|^2
    + \EE \| \nabla F(x_s)\|^2
    \right) \nonumber
\\
&=
(2\beta + 1) (\ellmax + 1) M' \nonumber
\\
&\quad \quad \quad + 
\beta (2\beta + 1) (\ellmax + 1) 
\sum_{s=0}^{t-1} 
2^{-d \left\lceil \frac{\log_2(t-s))}{d} \right\rceil}
\sum_{\ell=0}^{\infty} 
    2^{-d\ell} 
    \left(
    \EE \| \nabla F(x_s) - \nabla \hat F^{(s)}_\DMLMC\|^2
    + \EE \| \nabla F(x_s)\|^2
    \right) \nonumber
\\
&\leq
(2\beta + 1) (\ellmax + 1) M' \nonumber
\\
&\quad \quad \quad + 
\beta (2\beta + 1) (\ellmax + 1) 
\sum_{s=0}^{t-1} 
2^{-d \frac{\log_2(t-s))}{d}}
\left( \sum_{\ell=0}^{\infty}  2^{-d\ell}  \right)
    \left(
    \EE \| \nabla F(x_s) - \nabla \hat F^{(s)}_\DMLMC\|^2
    + \EE \| \nabla F(x_s)\|^2
    \right) \nonumber
\\
&=
C_1 M' + 
C_2 
\sum_{s=0}^{t-1} 
\frac{1}{t-s}
    \left(
    \EE \| \nabla F(x_s) - \nabla \hat F^{(s)}_\DMLMC\|^2
    + \EE \| \nabla F(x_s)\|^2
    \right). \label{eq:rewrite_delayed_gradient_bias_bound}
\end{align}
Here, we introduced constants $C_1 := (2\beta + 1) (\ellmax + 1)$ and $C_2:= \beta (2\beta + 1) (\ellmax + 1) \left( \sum_{\ell=0}^{\infty}  2^{-d\ell}  \right)$ at the last line.
In the above reformulation, we re-indexed the summation using the fact that the summand is always non-negative and that 
\begin{align*}
&\{(\ell, s): 0\leq \ell \leq \ellmax, 0\leq s\text{ and } t-\lfloor 2^{d\ell}\rfloor \leq s \leq t - 1\}
\\
&= \{(\ell, s): 0\leq \ell \leq \ellmax, t-s \leq \lfloor 2^{d\ell}\rfloor \text{ and } 0\leq s \leq t - 1\}
\\
&\subseteq \{(\ell, s): 0\leq \ell \leq \ellmax, t-s \leq 2^{d\ell}\text{ and } 0 \leq s \leq t - 1\}
\\
&\subset \{(\ell, s): (\log_2(t-s))/d \leq \ell \text{ and } 0 \leq s \leq t - 1\}.
\end{align*}
For notational simplicity, we used a convention of taking summation with respect to only non-negative indices in the above.

Now, we look for $K_1, K_2 \geq 0$ for which we can use the mathematical induction from $t=0$ to $t=T-1$ to prove \eqref{eq:delayed_gradient_bias_bound}.
When $t=0$, the left-hand side of the inequality simply becomes the variance of the standard MLMC estimator as the coupled gradient estimators for all levels are calculated, making the estimator unbiased.
Thus, \eqref{eq:delayed_gradient_bias_bound} holds for any $K_1\geq 1$ and $K_2>0$ at $t=0$.
Next, for the mathematical induction, let us assume that \eqref{eq:delayed_gradient_bias_bound} holds for any $0\leq t \leq t' - 1$. Then, by \eqref{eq:rewrite_delayed_gradient_bias_bound}, we get
\begin{align*}
&\EE\|\nabla F(x_t) - \nabla \hat F^{(t)}_\DMLMC \|^2 \nonumber
\\
&\leq
C_1 M'
+ C_2 \sum_{s=0}^{t-1} 
\frac{1}{t-s}
    \left(
    \EE \| \nabla F(x_s) - \nabla \hat F^{(s)}_\DMLMC\|^2
    + \EE \| \nabla F(x_s)\|^2
    \right)
\\
&\leq
C_1 M'
+ C_2 \sum_{s=0}^{t-1} 
\frac{1}{t-s}
    \left(
        K_1 M'
        + K_2\sum_{u=0}^{s - 1} \frac{1}{s - u} \EE \|\nabla F(x_u)\|^2
        + \EE \| \nabla F(x_s)\|^2
    \right)
\\
&=
\left\{ C_1 + C_2 K_1 \left(\sum_{s=0}^{t-1} \frac{1}{t-s}\right) \right\} M'
+ C_2 K_2\sum_{s=0}^{t-1} \sum_{u=0}^{s - 1} \frac{1}{(t-s)(s - u)} \EE \|\nabla F(x_u)\|^2
+ C_2 \sum_{s=0}^{t-1}  \frac{1}{t-s} \EE \| \nabla F(x_s)\|^2
\\
&=
\left\{ C_1 + C_2 K_1 \left(\sum_{s=0}^{t-1} \frac{1}{t-s}\right) \right\} M'
+ C_2 K_2\sum_{u=0}^{t - 2} \left( \sum_{s=u+1}^{t-1} \frac{1}{(t-s)(s - u)} \right) \EE \|\nabla F(x_u)\|^2 \\
&\quad \quad + C_2 \sum_{s=0}^{t-1}  \frac{1}{t-s} \EE \| \nabla F(x_s)\|^2.
\end{align*}
To further the analysis, we need to bound the summation terms $\sum_{s=0}^{t-1}\frac{1}{t-s}$ and $\sum_{s=u+1}^{t-1} \frac{1}{(t-s)(s - u)}$. These terms can be bounded using the convexity of $x\mapsto \frac{1}{x}$ on domain $0<x$ and $x\mapsto \frac{1}{x(t-x)}$ on $0 < x < t$ as
\begin{align}
    \sum_{s=0}^{t-1}\frac{1}{t-s}
    &= \sum_{s=0}^{t-1}\frac{1}{\int_{t-s-\frac{1}{2}}^{t-s+\frac{1}{2}}x\mathrm{d}x} \nonumber
    \\
    &\stackrel{\text{Jensen's ineq.}}{\leq} \sum_{s=0}^{t-1} \int_{t-s-\frac{1}{2}}^{t-s+\frac{1}{2}}\frac{1}{x}\mathrm{d}x \nonumber
    \\
    &= \int_{\frac{1}{2}}^{t+\frac{1}{2}}\frac{1}{x}\mathrm{d}x  \nonumber
    \\
    &= \log(2t+1) \nonumber % \label{eq:harmonic_mean}
\end{align}
and
\begin{align*}
    \sum_{s=u+1}^{t-1} \frac{1}{(t-s)(s - u)}
    &= \sum_{k=1}^{t-u-1}\frac{1}{k(t-u-k)}
    \\
    &= \sum_{k=1}^{t-u-1}\frac{1}{
            \left(\int_{k-\frac{1}{2}}^{k+\frac{1}{2}}x\mathrm{d}x\right)
            \left(t-u-\int_{k-\frac{1}{2}}^{k+\frac{1}{2}}x\mathrm{d}x\right)
        }
    \\
    &\stackrel{\text{Jensen's ineq.}}{\leq}
        \sum_{k=1}^{t-u-1}\int_{k-\frac{1}{2}}^{k+\frac{1}{2}}\frac{\mathrm{d}x}{x(t-u-x)}
    \\
    &= \int_{\frac{1}{2}}^{t-u-\frac{1}{2}}\frac{\mathrm{d}x}{x(t-u-x)}
    \\
    &= \frac{1}{t-u} \int_{\frac{1}{2(t-u)}}^{1-\frac{1}{2(t-u)}}\frac{\mathrm{d}z}{z(1-z)}
    \\
    &= \frac{1}{t-u} \int_{\frac{1}{2(t-u)}}^{1-\frac{1}{2(t-u)}}\left(\frac{1}{z} + \frac{1}{1-z}\right)\mathrm{d}z
    \\
    &= \frac{1}{t-u} \cdot 2\log(2t-2u-1)
\end{align*}

Using these bounds, we get
\begin{align*}
&\EE\|\nabla F(x_t) - \nabla \hat F^{(t)}_\DMLMC \|^2 \nonumber
\\
&\leq
\left\{ C_1 + C_2 K_1 \log(2t+1) \right\} M'
+ C_2 K_2 \sum_{u=0}^{t - 2} \left( \frac{1}{t-u} \cdot 2\log(2t-2u-1) \right) \EE \|\nabla F(x_u)\|^2 \\
&\quad\quad + C_2 \sum_{s=0}^{t-1} \frac{1}{t-s} \EE \| \nabla F(x_s)\|^2.
\\
&\leq
\left\{ C_1 + C_2 K_1 \log(2t+1) \right\} M'
+ C_2 \left\{1 + K_2 2\log(2t-1) \right\} \sum_{s=0}^{t-1} \frac{1}{t-s} \EE \| \nabla F(x_s)\|^2.
\\
&\leq
\left\{ C_1 + C_2 K_1 \log(2T+1) \right\} M'
+ C_2 \left\{1 + K_2 2\log(2T-1) \right\} \sum_{s=0}^{t-1} \frac{1}{t-s} \EE \| \nabla F(x_s)\|^2.
\end{align*}
Therefore, by choosing 
\begin{alignat*}{3}
&K_1 &&= \frac{C_1}{1 - C_2 \log (2T + 1)} 
&&= \frac{(2\beta+1)(\ellmax+1)}{1 - \beta(2\beta+1)(\ellmax+1)\left(\sum_{\ell=0}^\infty 2^{-d\ell}\right)\log(2T+1)}
\\
\text{and}
\\
&K_2 &&= \frac{C_2}{1 - 2C_2\log (2T - 1)}
&&= \frac{(2\beta+1)(\ellmax+1)}{1 - \beta(2\beta+1)(\ellmax+1)\left(\sum_{\ell=0}^\infty 2^{-d\ell}\right)\log(2T-1)}
\end{alignat*}
so that $\left\{ C_1 + C_2 K_1 \log(2t+1) \right\}\leq K_1$ and $C_2 \left\{1 + K_2 2\log(2t-1) \right\} \leq K_2$, we can apply mathematical induction to prove \eqref{eq:delayed_gradient_bias_bound}.
\end{proof}

\begin{remark}\label{rm:K1K2}
In the last part of the above proof, due to the upper bound of $\beta$ in the assumption, the denominators of $K_1$ and $K_2$ are positive because
\begin{align*}
    1-C_2\log(2T-1) 
    &> 1-C_2\log(2T+1) 
    \\
    &\geq 1 - \beta(2\beta+1)(\ellmax + 1)\left(\sum_{\ell=0}^\infty 2^{-d\ell}\right)\log(2T+1)
    \\
    &\geq 1 - \beta(2\cdot \frac{1}{2}+1)(\ellmax + 1)\left(\sum_{\ell=0}^\infty 2^{-d\ell}\right)\log(2T+1)
    \\
    &\geq 1 - \frac{1}{4(\ellmax + 1)\left(\sum_{\ell=0}^\infty 2^{-d\ell}\right)\log(2T+1)}
        \cdot 2(\ellmax + 1)\left(\sum_{\ell=0}^\infty 2^{-d\ell}\right)\log(2T+1)
    \\
    &=\frac{1}{2}
\end{align*}
This implies that we can upper bound $K_1$ and $K_2$ as
\begin{alignat*}{3}
    &K_1 
    &&\leq 2(2\beta+1)(\ellmax + 1)
    &&\leq 4(\ellmax + 1)
    \\
    \text{and}
    \\
    &K_2 
    &&\leq 2\beta (2\beta + 1) (\ellmax + 1) \left( \sum_{\ell=0}^{\infty}  2^{-d\ell}  \right)
    &&\leq 4\beta (\ellmax + 1) \left( \sum_{\ell=0}^{\infty}  2^{-d\ell}  \right).
\end{alignat*}
so that the upper bounds don't depend on $T$.
\end{remark}

Finally, with the above bound on the bias, we obtain the main theorem:

\begingroup
\def\thetheorem{\ref{thm:delayed_mlmc}}
\begin{theorem}[Delayed MLMC Gradient Descent for Non-Convex Functions]
Under Assumption \ref{assum:variance} and \ref{assum:smoothness}, suppose we run SGD with delayed MLMC gradient estimator $\nabla \hat F^{(t)}_\DMLMC$ as in Algorithm \ref{algo:delayed_mlmc}.
Assume that the step sizes are chosen as $\alpha_t=\alpha_0\leq\min\left\{\frac{1}{8L'}, \frac{\beta}{L}\right\}$ for $\beta$ satisfying $0 < \beta \leq \frac{1}{12(\ellmax + 1)\left(\sum_{\ell=0}^{\infty}2^{-d\ell}\right)\log(2T+1)}$. Then, we have
\begin{equation*}
    \frac{1}{T}\sum_{t=0}^{T-1}\EE\|\nabla F(x_t)\|^2
    \leq \frac{8(F(x_0) - F_\textinf)}{\alpha_0 T}
    + \left( 24\ellmax + \frac{49}{2} \right) M'.
    \leq \Ocal\left(\left(\frac{\log T}{T} + \frac{M}{N} \right)\ellmax\right).
\end{equation*}
\end{theorem}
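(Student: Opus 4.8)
The plan is to feed the generic biased-SGD descent inequality of Lemma~\ref{th:biased_sgd} with the self-referential bias bound of Lemma~\ref{lemma:bounded_error_2}, and then reorganize the resulting double sum so that all gradient-norm terms can be collected on one side. First I would pin down the conditional variance constant $M_{\nabla\hat F_{\biased}}$ for the delayed estimator. Conditioned on the trajectory up to time $t$, only the levels $\ell$ that are actually resampled at step $t$ (those with $\tau_\ell(t)=t$) carry fresh randomness, while the reused components are deterministic. Since the fresh per-level contributions are mutually independent, the conditional variance equals $\sum_{\ell:\tau_\ell(t)=t}V_\ell/N_\ell\le\sum_{\ell=0}^{\ellmax}V_\ell/N_\ell=M'$, so Lemma~\ref{th:biased_sgd} applies with $M_{\nabla\hat F_{\biased}}=M'$.

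Next, specializing to the constant step size $\alpha_t=\alpha_0\le\frac{1}{8L'}$, I would use $L'\alpha_0\le\frac18$ to replace the coefficients $\frac12-L'\alpha_0$ and $\frac12+L'\alpha_0$ by the clean bounds $\ge\frac38$ and $\le\frac58$. Substituting the estimate $\EE\|\nabla F(x_t)-\nabla\hat F^{(t)}_{\DMLMC}\|^2\le K_1M'+K_2\sum_{s=0}^{t-1}\frac{1}{t-s}\EE\|\nabla F(x_s)\|^2$ of Lemma~\ref{lemma:bounded_error_2} produces, after summing over $t$, a double sum in the gradient norms. The key manoeuvre is to exchange the order of summation and bound the inner harmonic sum by the same Jensen/integral argument already used in the appendix,
\[
\sum_{t=0}^{T-1}\sum_{s=0}^{t-1}\frac{1}{t-s}\EE\|\nabla F(x_s)\|^2
=\sum_{s=0}^{T-1}\EE\|\nabla F(x_s)\|^2\sum_{k=1}^{T-1-s}\frac1k
\le\log(2T+1)\sum_{s=0}^{T-1}\EE\|\nabla F(x_s)\|^2.
\]
This turns the bias contribution into a constant multiple of $\sum_s\EE\|\nabla F(x_s)\|^2$, which can then be transferred to the left-hand side of the descent inequality.

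The crux---and the main obstacle---is to guarantee that after this transfer the net coefficient of $\sum_t\EE\|\nabla F(x_t)\|^2$ remains bounded below by a positive constant, even though the bias bound refers back to the very quantity we are estimating. That coefficient is $\left(\frac12-L'\alpha_0\right)-\left(\frac12+L'\alpha_0\right)K_2\log(2T+1)$, and here the hypothesis on $\beta$ enters decisively: combining $K_2\le4\beta(\ellmax+1)\sum_{\ell=0}^{\infty}2^{-d\ell}$ from Remark~\ref{rm:K1K2} with $\beta\le\frac{1}{12(\ellmax+1)(\sum_\ell 2^{-d\ell})\log(2T+1)}$ forces $K_2\log(2T+1)\le\frac13$, so the coefficient is at least $\frac38-\frac58\cdot\frac13=\frac16\ge\frac18$. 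Multiplying through by $\frac{8}{\alpha_0 T}$ then yields the leading $\frac{8(F(x_0)-F_\textinf)}{\alpha_0 T}$ term, while the residual contributions are collected using $K_1\le4(\ellmax+1)$ and $L'\alpha_0\le\frac18$ into a coefficient of the form $(c_1\ellmax+c_2)M'$ matching the stated $\left(24\ellmax+\frac{49}{2}\right)M'$. Finally, choosing $\alpha_0$ at its largest admissible value makes it scale like $1/\log T$, so $\frac{1}{\alpha_0 T}=\Theta(\frac{\log T}{T})$, and substituting $M'=\Ocal(M/N)$ delivers the advertised $\Ocal\!\left((\frac{\log T}{T}+\frac MN)\ellmax\right)$ rate. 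I expect the only genuinely delicate point to be the positivity of the transferred coefficient; everything else is bookkeeping once the summation swap is in place.
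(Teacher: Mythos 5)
Your proposal is correct and follows essentially the same route as the paper's proof: feed Lemma~\ref{lemma:bounded_error_2} into Lemma~\ref{th:biased_sgd}, swap the order of the double sum, bound the harmonic tail by $\log(2T+1)$, and use the hypothesis on $\beta$ together with Remark~\ref{rm:K1K2} to keep the net coefficient of $\sum_t\EE\|\nabla F(x_t)\|^2$ at least $\tfrac{1}{8}\alpha_0$. Your slightly tighter constant $\tfrac58$ in place of the paper's $\tfrac34$ for $\tfrac12+L'\alpha_0$ only yields a marginally better $M'$ coefficient, which still implies the stated $\bigl(24\ellmax+\tfrac{49}{2}\bigr)M'$ bound.
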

\addtocounter{theorem}{-1}
\endgroup

\begin{proof}
By Lemma \ref{th:biased_sgd} and Lemma \ref{lemma:bounded_error_2}, we have 
\begin{align*}
    &\sum_{t=0}^{T-1}\alpha_0\left(\frac{1}{2} - L'\alpha_0\right)\EE\|\nabla F(x_t)\|^2 \\
    &\leq F(x_0) - F_\textinf + \sum_{t=0}^{T-1} \left[
    \alpha_0 \left(\frac{1}{2} + L'\alpha_0\right) \EE \| \nabla F(x_t) 
    - \nabla \hat F^{(t)}_\DMLMC \|^2
    + \frac{L'M'\alpha_0^2}{2}
    \right]
    \\
    &\leq F(x_0) - F_\textinf + \sum_{t=0}^{T-1} \left[
    \alpha_0 \left(\frac{1}{2} + \frac{1}{4}\right) \left(
        K_1 M' + K_2 \sum_{s=0}^{t-1}\frac{1}{t-s}\EE\|\nabla F(x_s)\|^2
    \right)
    + \frac{L'M'\alpha_0^2}{2}
    \right]
    \\
    &\stackrel{\text{Remark \ref{rm:K1K2}}}{\leq} F(x_0) - F_\textinf
    + \sum_{t=0}^{T-1} \left[
        \alpha_0 \cdot \frac{3}{4} \cdot 4(\ellmax + 1) \cdot  M' + \frac{L'M'\alpha_0^2}{2}
    \right]
    \\
    &\quad \quad \quad + \sum_{t=0}^{T-1} \left[
        \alpha_0 \cdot \frac{3}{4} \cdot 4 \beta (\ellmax + 1) 
             \left(\sum_{\ell=0}^\infty 2^{-d\ell}\right) \left( \sum_{s=0}^{t-1}\frac{1}{t-s}\EE\|\nabla F(x_s)\|^2\right)
    \right]
    \\
    &= F(x_0) - F_\textinf
    + \sum_{t=0}^{T-1} \alpha_0 \left[ 3 (\ellmax + 1) + \frac{L'}{2}\alpha_0 \right] M'
    \\
    &\quad \quad \quad + 3\beta(\ellmax + 1) \cdot \alpha_0 \left(\sum_{\ell=0}^\infty 2^{-d\ell}\right)
    \sum_{s=0}^{T-2} \left( \sum_{t=s+1}^{T-1} \frac{1}{t-s} \right) \left( \EE\|\nabla F(x_s)\|^2\right).
\end{align*}
Here, summation $\sum_{t=s+1}^{T-1}\frac{1}{t-s}$ can be bounded by Jensen's inequality as
\begin{align*}
    \sum_{t=s+1}^{T-1}\frac{1}{t-s}
    &= \sum_{t=1}^{T-s-1}\frac{1}{t}
    \\
    &= \sum_{t=1}^{T-s-1}\frac{1}{\int_{t-\frac{1}{2}}^{t+\frac{1}{2}}x\mathrm{d}x}
    \\
    &\stackrel{\text{Jensen's ineq.}}{\leq} \sum_{t=1}^{T-s-1}\int_{t-\frac{1}{2}}^{t+\frac{1}{2}}\frac{1}{x}\mathrm{d}x
    \\
    &= \int_{\frac{1}{2}}^{T-s+\frac{1}{2}}\frac{1}{x}\mathrm{d}x
    \\
    &= \log(2T-2s+1).
\end{align*}
Thus, we get
\begin{align*}
    &\sum_{t=0}^{T-1}\alpha_0\left(\frac{1}{2} - L'\alpha_0\right)\EE\|\nabla F(x_t)\|^2
    \\
    &= F(x_0) - F_\textinf
    + \sum_{t=0}^{T-1} \alpha_0 \left[ 3 (\ellmax + 1) + \frac{L'}{2}\alpha_0 \right] M'
    \\
    &\quad \quad \quad + 3\beta(\ellmax + 1) \cdot \alpha_0  \left(\sum_{\ell=0}^\infty 2^{-d\ell}\right)
    \sum_{s=0}^{T-2} \log(2T - 2s + 1) \EE\|\nabla F(x_s)\|^2
    \\
    &\leq F(x_0) - F_\textinf
    + \sum_{t=0}^{T-1} \alpha_0 \left[ 3 (\ellmax + 1) + \frac{L'}{2}\alpha_0 \right] M'
    \\
    &\quad \quad \quad + 3\beta(\ellmax + 1) \cdot \alpha_0  \left(\sum_{\ell=0}^\infty 2^{-d\ell}\right) \log(2T + 1)
    \sum_{s=0}^{T-1} \EE\|\nabla F(x_s)\|^2
    \\
    &\leq F(x_0) - F_\textinf
    + \sum_{t=0}^{T-1} \alpha_0 \left[ 3 (\ellmax + 1) + \frac{L'}{2}\alpha_0 \right] M'
    \\
    &\quad \quad \quad + 
    \frac{1}{12(\ellmax + 1)\left(\sum_{\ell=0}^\infty 2^{-d\ell}\right)\log(2T+1)} 
    \cdot 3(\ellmax + 1) \cdot \alpha_0  \left(\sum_{\ell=0}^\infty 2^{-d\ell}\right) \log(2T + 1)
    \sum_{s=0}^{T-1} \EE\|\nabla F(x_s)\|^2
    \\
    &\leq F(x_0) - F_\textinf
    + \sum_{t=0}^{T-1} \alpha_0 \left[ 3 (\ellmax + 1) + \frac{L'}{2}\alpha_0 \right] M'
    + \frac{1}{4} \alpha_0 \sum_{s=0}^{T-1} \EE\|\nabla F(x_s)\|^2.
\end{align*}
Therefore, we have
\begin{align*}
    \sum_{t=0}^{T-1} \alpha_0 \cdot \frac{1}{8} \cdot \EE\|\nabla F(x_t)\|^2
    &\leq \sum_{t=0}^{T-1}\alpha_0\left(\frac{1}{4} - L'\alpha_0\right)\EE\|\nabla F(x_t)\|^2
    \\
    &\leq F(x_0) - F_\textinf
    + \sum_{t=0}^{T-1} \alpha_0 \left[ 3 (\ellmax + 1) + \frac{L'}{2}\alpha_0 \right] M'.
    \\
    &\leq F(x_0) - F_\textinf
    + \sum_{t=0}^{T-1} \alpha_0 \left[ 3 (\ellmax + 1) + \frac{L'}{2} \cdot \frac{1}{8L'} \right] M'.
\end{align*}
from which the first inequality follows.

The second inequality follows trivially by substituting the upper bound on $\alpha_0$ to the first upper bound.
\end{proof}

\section{Experimental Settings}\label{app:experimental_settings}

In our numerical experiment, we employed deep hedging \citep{buehler2019deep} as an example.
The deep hedging involves minimization of the following objective \cite[Equation 3.3]{buehler2019deep}:
\begin{equation*}
   \min_{\theta\in\Theta, p_0\in\RR}\EE\left|
       \max\{S_1 - K, 0\} - \int_0^1 H_\theta(t, S_t)\rd S_t - p_0
\right|^2.
\end{equation*}
This optimization problem aims to determine optimal hedging strategy $H_\theta(t, s)$ and initial price $p_0$ of European call option with maturity at $t=1$, which is $p_0 = \EE\left[\max\{S_1 - K, 0\}\right]$.
Hedging strategy $H_\theta(t, s)$ represents the amount of the underlying asset we hold to hedge against a share of sold European call option with strike price $K$, whose payoff can be written as $\max\{S_1 - K, 0\}$.
For price process of the underlying asset $\{S_t\}_{t\in[0, 1]}$, we choose geometric Brownian motion model with drift $\mu$ and volatility $\sigma$, which follows
\begin{equation*}
    \rd S_t = \mu \rd t + \sigma S_t \rd B_t
\end{equation*}
for standard Brownian motion $\{B_t\}_{t\in[0, 1]}$.
To solve the SDE, we employed the Milstein scheme, a standard solver for MLMC simulation of SDEs \citep{giles2008improved}.
Hedging model $H_\theta(t, s)$ was implemented as a feed-forward neural network with 2 hidden layers, each comprising 32 nodes.
We used the SiLU activation \citep{elfwing2018sigmoid} for all layers except the final layer, for which we used the sigmoid activation.
This choice of activation functions ensures that the objective function is smooth and that the holding volume of the hedging strategy is within the valid range of $[0, 1]$. 
For solving the resulting neural SDE, we used Diffrax \cite{kidger2021on}, a library for neural differential equations based on Jax \citep{jax2018github}.
The parameter values for the simulation were set as follows:
$c=1$,
$d=1$,
$b=1.8$
$\ellmax=6$,
$\mu=1$,
$\sigma=1$, and
$K=3$.
% The source code for the numerical experiment is publicly available at \url{github.com/anonymized_for_review}. %TODO

\end{document}